\icmltitlerunning{Dissecting Adam: The Sign, Magnitude and Variance of Stochastic Gradients}
\definecolor{mpg}{rgb}{0, 0.4717, 0.4604}
\definecolor{blu}{rgb}{0., 0., 0.509}
\newtheorem{theorem}{Theorem}
\newtheorem{lemma}{Lemma}
\newtheorem*{lemma*}{Lemma}
\newtheorem*{problem*}{Model Problem}
\newtheorem{assumption}{Assumption}
\DeclareMathOperator{\sign}{sign}
\DeclareMathOperator{\tr}{tr}
\newcommand{\erf}{\operatorname{erf}}
\newcommand{\B}{\mathcal{B}}
\renewcommand{\L}{\mathcal{L}}
\DeclareMathOperator*{\argmin}{arg\,min}
\begin{document}

\twocolumn[
\icmltitle{Dissecting Adam: The Sign, Magnitude and Variance of Stochastic Gradients}

% It is OKAY to include author information, even for blind
% submissions: the style file will automatically remove it for you
% unless you've provided the [accepted] option to the icml2018
% package.

% List of affiliations: The first argument should be a (short)
% identifier you will use later to specify author affiliations
% Academic affiliations should list Department, University, City, Region, Country
% Industry affiliations should list Company, City, Region, Country

% You can specify symbols, otherwise they are numbered in order.
% Ideally, you should not use this facility. Affiliations will be numbered
% in order of appearance and this is the preferred way.
\icmlsetsymbol{equal}{*}

\begin{icmlauthorlist}
\icmlauthor{Lukas Balles}{mpi}
\icmlauthor{Philipp Hennig}{mpi}
\end{icmlauthorlist}

\icmlaffiliation{mpi}{Max Planck Institute for Intelligent Systems, T\"ubingen, Germany}

\icmlcorrespondingauthor{Lukas Balles}{lballes@tue.mpg.de}

% You may provide any keywords that you
% find helpful for describing your paper; these are used to populate
% the "keywords" metadata in the PDF but will not be shown in the document
\icmlkeywords{Optimization, Stochastic Optimization, Adam, Deep Learning, Machine Learning, ICML}

\vskip 0.3in
]

% this must go after the closing bracket ] following \twocolumn[ ...

% This command actually creates the footnote in the first column
% listing the affiliations and the copyright notice.
% The command takes one argument, which is text to display at the start of the footnote.
% The \icmlEqualContribution command is standard text for equal contribution.
% Remove it (just {}) if you do not need this facility.

\printAffiliationsAndNotice{}  % leave blank if no need to mention equal contribution
%\printAffiliationsAndNotice{\icmlEqualContribution} % otherwise use the standard text.

\begin{abstract}
The \textsc{adam} optimizer is exceedingly popular in the deep learning community.
Often it works very well, sometimes it doesn't. Why?
We interpret \textsc{adam} as a combination of two aspects: for each weight, the update direction is determined by the \emph{sign} of stochastic gradients, whereas the update magnitude is determined by an estimate of their \emph{relative variance}.
We disentangle these two aspects and analyze them in isolation, gaining insight into the mechanisms underlying \textsc{adam}.
This analysis also extends recent results on adverse effects of \textsc{adam} on generalization, isolating the sign aspect as the problematic one.
Transferring the variance adaptation to \textsc{sgd} gives rise to a novel method, completing the practitioner's toolbox for problems where \textsc{adam} fails.
\end{abstract}

\setcounter{footnote}{1}

\section{Introduction}

Many prominent machine learning models pose empirical risk minimization problems with objectives of the form
\begin{align}
\label{eq:erm_problem}
\L (\theta) &= \frac{1}{M} \sum_{k=1}^M \ell(\theta; x_k),\\
\nabla \L(\theta) & = \frac{1}{M} \sum_{k=1}^M \nabla \ell(\theta; x_k),
\end{align}
where $\theta\in\mathbb{R}^d$ is a vector of parameters, $\{x_1, \dotsc, x_M\}$ is a training set, and $\ell(\theta; x)$ is a loss quantifying the performance of parameters $\theta$ on example $x$.
Computing the exact gradient in each step of an iterative optimization algorithm becomes inefficient for large $M$.
Instead, we sample a mini-batch $\B \subset \{1,\dotsc, M\}$ of size $\vert \B\vert \ll M$ with data points drawn uniformly and independently from the training set and compute an approximate \emph{stochastic gradient}
\begin{equation}
\label{eq:stochastic_gradient}
g(\theta) = \frac{1}{\vert \B\vert} \sum_{k\in\B} \nabla \ell(\theta; x_k),
\end{equation}
which is a random variable with $\mathbf{E}[g(\theta)]=\nabla \L(\theta)$.
%We will denote by $\sigma_i^2(\theta) := \mathbf{var}[g(\theta)_i]$ its element-wise variances.
An important quantity for this paper will be the (element-wise) variances of the stochastic gradient, which we denote by $\sigma_i^2(\theta) := \mathbf{var}[g(\theta)_i]$.

%With $k\sim\mathcal{U}(\{1,\dotsc, M\})$, the gradient $\nabla \ell_k(\theta):=\nabla \ell(\theta, x_k)$ is a random variable with mean $\mathbf{E}[\nabla \ell_k(\theta)] = \nabla \L(\theta)$ and variances $\mathbf{var}[\nabla \ell_k(\theta)_i] = M^{-1} \sum_{k^\prime =1}^M (\nabla \ell_{k^\prime}(\theta)_i - \nabla \L(\theta)_i)^2$.
%If the elements of $\B$ are drawn iid with replacement, the variance of $g(\theta)$ scales as $\sigma(\theta)_i^2 := \mathbf{var}[g(\theta)_i] = \vert\B\vert^{-1} \mathbf{var}[\nabla \ell_k(\theta)_i]$.
%This holds \emph{approximately} for sampling \emph{without} replacement if $\vert \B\vert \ll M$.

Widely-used stochastic optimization algorithms are stochastic gradient descent \citep[\textsc{sgd},][]{Robbins1951} and its momentum variants \citep{Polyak1964, Nesterov1983}.
A number of methods popular in deep learning choose per-element update magnitudes based on past gradient observations.
Among these are \textsc{adagrad}~\citep{Duchi2011}, \textsc{rmsprop}~\citep{Tieleman2012}, \textsc{adadelta}~\citep{Zeiler2012}, and \textsc{adam}~\citep{Kingma2014}.

\emph{Notation:} In the following, we occasionally drop $\theta$, writing $g$ instead of $g(\theta)$, et cetera.
We use shorthands like $\nabla\L_t$, $g_t$ for sequences $\theta_t$ and double indices where needed, e.g., $g_{t,i} = g(\theta_t)_i$, $\sigma^2_{t,i}= \sigma_i^2(\theta_t)$.
Divisions, squares and square-roots on vectors are to be understood \emph{element-wise}. To avoid confusion with inner products, we explicitly denote element-wise multiplication of vectors by $\odot$.

\subsection{A New Perspective on Adam}

\label{new_perspective_on_adam}

We start out from a reinterpretation of the widely-used \textsc{adam} optimizer,\footnote{Some of our considerations naturally extend to \textsc{adam}'s relatives \textsc{rmsprop} and \textsc{adadelta}, but we restrict our attention to \textsc{adam} to keep the presentation concise.}
which maintains moving averages of stochastic gradients and their element-wise square,
\begin{align}
\tilde{m}_t & = \beta_1 \tilde{m}_{t-1} + (1-\beta_1) g_t, & m_t &= \frac{\tilde{m}_t}{1-\beta_1^{t+1}},\\
\tilde{v}_t & = \beta_2 \tilde{v}_{t-1} + (1-\beta_2) g_t^2, & v_t &= \frac{\tilde{v}_t}{1-\beta_2^{t+1}},
\end{align}
with $\beta_1, \beta_2\in(0, 1)$ and updates
\begin{equation}
\label{eq:adam_update}
\theta_{t+1} = \theta_t - \alpha \frac{m_t}{\sqrt{v_t} + \varepsilon}
\end{equation}
with a small constant $\varepsilon>0$ preventing division by zero.
Ignoring $\varepsilon$ and assuming $\vert m_{t, i}\vert >0$ for the moment, we can rewrite the update direction as
%\begin{equation}
%\label{eq:rewriting_adam}
%\frac{ m_t }{\sqrt{v_t}} =  \frac{ \sign(m_t) }{ \sqrt{\frac{v_t}{m_t^2}} }
%= \frac{ \sign(m_t) }{ \sqrt{1 + \frac{v_t-m_t^2}{m_t^2}} },
%\end{equation}
\begin{equation}
\label{eq:rewriting_adam}
\frac{ m_t }{\sqrt{v_t}} =\frac{ \sign(m_t) }{ \sqrt{\frac{v_t}{m_t^2}} } = \sqrt{ \frac{1}{1+\frac{v_t-m_t^2}{m_t^2}} } \odot \sign(m_t),
\end{equation}
where the sign is to be understood element-wise.
Assuming that $m_t$ and $v_t$ approximate the first and second moment of the stochastic gradient---a notion that we will discuss further in \textsection\ref{estimating_variance_with_moving_averages}---$(v_t - m_t^2)$ can be seen as an estimate of the stochastic gradient variances.
The use of the \emph{non-central} second moment effectively cancels out the magnitude of $m_t$; it \emph{only} appears in the ratio $(v_t-m_t^2)/m_t^2$.
Hence, \textsc{adam} can be interpreted as a combination of two aspects:
\begin{itemize}
\item The update \emph{direction} for the $i$-th coordinate is given by the \emph{sign} of $m_{t,i}$.
%, which is identical to the sign of a momentum term.
\item The update \emph{magnitude} for the $i$-th coordinate is solely determined by the global step size $\alpha$ and the factor
\begin{equation}
\gamma_{t,i} := \sqrt{\frac{1}{1+\hat{\eta}_{t,i}^2}},
\end{equation}
where $\hat{\eta}_{t,i}$ is an estimate of the \emph{relative variance},
\begin{equation}
\hat\eta^2_{t,i} := \frac{v_{t, i} - m_{t,i}^2}{m_{t,i}^2} \approx\frac{\sigma^2_{t,i}}{\nabla \L_{t,i}^2} =: \eta^2_{t,i}.
\end{equation}
\end{itemize}
We will refer to the second aspect as \emph{variance adaptation}.
The variance adaptation factors shorten the update in directions of high relative variance, adapting for varying reliability of the stochastic gradient in different coordinates.

The above interpretation of \textsc{adam}'s update rule has to be viewed in contrast to existing ones.
A motivation given by \citet{Kingma2014} is that $v_t$ is a diagonal approximation to the empirical Fisher information matrix (FIM), making \textsc{adam} an approximation to natural gradient descent \citep{Amari1998}.
Apart from fundamental reservations towards the \emph{empirical} Fisher and the quality of \emph{diagonal} approximations \citep[][\textsection 11]{Martens2014}, this view is problematic because the FIM, if anything, is approximated by $v_t$, whereas \textsc{adam} adapts with the square-root $\sqrt{v_t}$.

Another possible motivation (which is not found in peer-reviewed publications but circulates the community as ``conventional wisdom'') is that \textsc{adam} performs an approximate \emph{whitening} of stochastic gradients.
However, this view hinges on the fact that \textsc{adam} divides by the square-root of the \emph{non-central} second moment, not by the standard deviation.

\subsection{Overview}

Both aspects of \textsc{adam}---taking the sign and variance adaptation---are briefly mentioned in \citet{Kingma2014}, who note that ``[t]he effective stepsize [...] is also invariant to the scale of the gradients'' and refer to $m_t/\sqrt{v_t}$ as a ``signal-to-noise ratio''.
The purpose of this work is to disentangle these two aspects in order to discuss and analyze them in isolation.

This perspective naturally suggests two alternative methods by incorporating one of the aspects while excluding the other.
Taking the sign of a stochastic gradient without any further modification gives rise to \emph{Stochastic Sign Descent} (\textsc{ssd}).
On the other hand, \emph{Stochastic Variance-Adapted Gradient} (\textsc{svag}), to be derived in \textsection\ref{va_for_grad}, applies variance adaptation directly to the stochastic gradient instead of its sign.
Together with \textsc{adam}, the momentum variants of \textsc{sgd}, \textsc{ssd}, and \textsc{svag} constitute the four possible recombinations of the sign aspect and the variance adaptation, see Fig.~\ref{fig:methods_overview}.
%Analyzing these methods theoretically and empirically will help answer the question to what extent the two aspects contribute to performance differences between \textsc{adam} and \textsc{sgd}.

We proceed as follows:
Section \ref{why_sign} discusses the sign aspect.
In a simplified setting  we investigate under which circumstances the sign of a stochastic gradient is a better update direction than the stochastic gradient itself.
Section \ref{variance_adaptation} presents a principled derivation of element-wise variance adaptation factors.
Subsequently, we discuss the practical implementation of variance-adapted methods (Section \ref{practical_implementation}).
Section \ref{connection_to_generalization} draws a connection to recent work on \textsc{adam}'s effect on generalization.
Finally, Section \ref{experiments} presents experimental results.

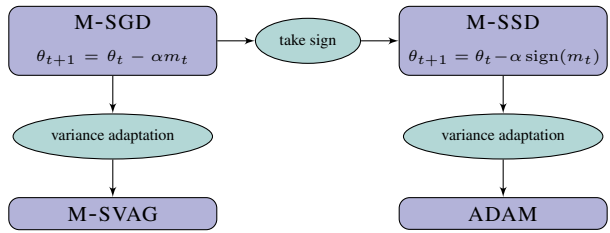
\begin{figure}
\centering
% Define block styles
\tikzstyle{block} = [rectangle, draw, fill=blu!30,
    text width=7.2em, text centered, rounded corners]
\tikzstyle{line} = [draw, -latex']
\tikzstyle{cloud} = [draw, ellipse,fill=mpg!30]
    
\begin{tikzpicture}[align=center, node distance=.5cm]
    % Place nodes
    \node [block] (sgd) {\textsc{m-sgd}\\ {\tiny $\theta_{t+1} = \theta_t - \alpha m_t$}};
    
    \node [cloud, below=of sgd] (va1) {\tiny variance adaptation};
    \node [block, below=of va1] (svag) {\textsc{m-svag}};
    
    \node [cloud, right=of sgd] (sign) {\tiny take sign};
    \node [block, right=of sign] (ssd) {\textsc{m-ssd\\ {\tiny $\theta_{t+1} = \theta_t - \alpha \sign(m_t)$}}};
    
    \node [cloud, below=of ssd] (va2) {\tiny variance adaptation};
    \node [block, below=of va2] (adam) {\textsc{adam}};
    
    \path [line] (sgd) -- (va1);
    \path [line] (va1) -- (svag);
    
    \path [line] (sgd) -- (sign);
    \path [line] (sign) -- (ssd);
    
    \path [line] (ssd) -- (va2);
    \path [line] (va2) -- (adam);
\end{tikzpicture}
\caption{The methods under consideration in this paper. ``\textsc{m-}'' refers to the use of $m_t$ in place of $g_t$, which we colloquially refer to as the \emph{momentum variant}.
\textsc{m-svag} will be derived below.}
\label{fig:methods_overview}
\end{figure}

\subsection{Related Work}

Sign-based optimization algorithms have received some attention in the past.
\textsc{rprop} \citep{Riedmiller1993} is based on gradient signs and adapts per-element update magnitudes based on observed sign changes.
\citet{Seide2014} empirically investigate the use of stochastic gradient signs in a distributed setting with the goal of reducing communication cost.
\citet{Karimi2016} prove convergence results for sign-based methods in the \emph{non-stochastic} case.

Variance-based update directions have been proposed before, e.g., by \citet{Schaul2013}, where the variance appears together with curvature estimates in a diagonal preconditioner for \textsc{sgd}.
Their variance-dependent terms resemble the variance adaptation factors we will derive in Section \ref{variance_adaptation}.
The corresponding parts of our work complement that of \citet{Schaul2013} in various ways.
Most notably, we provide a principled motivation for variance adaptation that is independent of the update direction and use that to extend the variance adaptation to the momentum case.

A somewhat related line of research aims to obtain \emph{reduced-variance} gradient estimates \citep[e.g.,][]{Johnson2013, Defazio2014}.
This is largely orthogonal to our notion of variance adaptation, which alters the search direction to mitigate adverse effects of the (remaining) variance.

\subsection{The Sign of a Stochastic Gradient}

For later use, we briefly establish some facts about the sign\footnote{To avoid a separate zero-case, we define $\sign(0)=1$ for all theoretical considerations.
Note that $g_i\neq 0$ a.s.~if $\mathbf{var}[g_i]>0$.}
of a stochastic gradient, $s=\sign(g)$.
The distribution of the binary random variable $s_i$ is fully characterized by the \emph{success probability} $\rho_i := \mathbf{P}\left[ s_i = \sign (\nabla\L_i)\right]$, which generally depends on the distribution of $g_i$.
If we assume $g_i$ to be normally distributed, which is supported by the Central Limit Theorem applied to Eq.~\eqref{eq:stochastic_gradient}, we have
\begin{equation}
\label{eq:rho_gaussian}
\rho_i = \frac{1}{2} + \frac{1}{2} \erf\left( \frac{\vert\nabla\L_i\vert}{\sqrt{2}\sigma_i} \right),
\end{equation}
see \textsection B.1 of the supplementary material.
Note that $\rho_i$ is uniquely determined by the relative variance of $g_i$.
%It is $\mathbf{E}[s_i]=(2\rho_i-1)\sign(\nabla\L_i)$. \lukas{We don't need that in the main text, do we?}

\section{Why the Sign?}
\label{why_sign}

Can it make sense to use the sign of a stochastic gradient as the update direction instead of the stochastic gradient itself?
This question is difficult to tackle in a general setting, but we can get an intuition using the simple, yet insightful, case of stochastic quadratic problems, where we can investigate the effects of curvature properties and noise.

\begin{problem*}[Stochastic Quadratic Problem, sQP]
\label{prop:sqp}
Consider the loss function $\ell(\theta; x)=0.5 \, (\theta-x)^T Q (\theta-x)$ with a symmetric positive definite matrix $Q\in\mathbb{R}^{d\times d}$ and ``data'' coming from the distribution $x\sim\mathcal{N}(x^\ast, \nu^2 I)$ with $\nu\in\mathbb{R}_+$.
The objective $\L(\theta)=\mathbf{E}_x[\ell(\theta; x)]$ evaluates to
\begin{equation}
\begin{split}
\L(\theta) = \frac{1}{2} (\theta-x^\ast)^TQ(\theta-x^\ast) + \frac{\nu^2}{2} \tr(Q),
\end{split}
\end{equation}
with $\nabla \L(\theta) = Q(\theta-x^\ast)$.
Stochastic gradients are given by $g(\theta)=Q(\theta-x)\sim\mathcal{N}(\nabla\L(\theta), \nu^2 QQ)$.
\end{problem*}
%First, we theoretically analyze the effects of these properties on the performance of \textsc{sgd} and ``stochastic sign descent''.
%We then validate these findings experimentally on artificially generated stochastic QPs.

\subsection{Theoretical Comparison}

We compare update directions on sQPs in terms of their local expected decrease in function value from a single step.
For any stochastic direction $z$, updating from $\theta$ to $\theta + \alpha z$ results in $\mathbf{E}[\L(\theta + \alpha z)] = \L(\theta) + \alpha \nabla\L(\theta)^T \mathbf{E}[z] + \frac{\alpha^2}{2} \mathbf{E}[z^TQz]$.
For this comparison of update \emph{directions} we use the optimal step size minimizing $\mathbf{E}[\L(\theta + \alpha z)]$, which is easily found to be $\alpha_\ast=-\nabla\L(\theta)^T\mathbf{E}[z]/\mathbf{E}[z^TQz]$ and yields an expected improvement of
\begin{equation}
\label{eq:improvement_def}
\mathcal{I}(\theta) := \vert \mathbf{E}[\L(\theta + \alpha_\ast z)] - \L(\theta) \vert = \frac{(\nabla \L(\theta)^T\mathbf{E}[z])^2}{2\mathbf{E}[z^TQz]}.
\end{equation}
Locally, a larger expected improvement implies a better update direction.
We compute this quantity for \textsc{sgd} ($z=-g(\theta)$) and \textsc{ssd} ($z=-\sign(g(\theta))$) in \textsection B.2 of the supplementary material and find
\begin{align}
\label{eq:improvement_sgd_ssd}
\mathcal{I}_\textsc{sgd}(\theta) & = \frac{1}{2}\,\frac{(\nabla \L(\theta)^T\nabla\L(\theta))^2}{\nabla\L(\theta)^T Q \nabla\L(\theta) + \nu^2 \sum_{i=1}^d \lambda_i^3}, \\
\mathcal{I}_\textsc{ssd}(\theta) & \geq \frac{1}{2} \frac{\left( \sum_{i=1}^d (2\rho_i - 1) \vert \nabla\L(\theta)_i\vert \right)^2}{\sum_{i=1}^d \lambda_i} p_\text{diag}(Q),
\end{align}
where the $\lambda_i\in\mathbb{R}_+$ are the eigenvalues of $Q$ and $p_\text{diag}(Q):= (\sum_{i=1}^d \vert q_{ii}\vert) / (\sum_{i,j=1}^d \vert q_{ij}\vert)$ measures the percentage of diagonal mass of $Q$.
$\mathcal{I}_\textsc{sgd}$ and $\mathcal{I}_\textsc{ssd}$ are \emph{local} quantities, depending on $\theta$, which makes a general and conclusive comparison impossible.
However, we can draw conclusions about how properties of the sQP affect the two update directions.
We make the following two observations:

Firstly, the term $p_\text{diag}(Q)$, which features only in $\mathcal{I}_\textsc{ssd}$, relates to the orientation of the eigenbasis of $Q$.
If $Q$ is diagonal, the problem is perfectly axis-aligned and we have $p_\text{diag}(Q)=1$.
This is the obvious best case for the intrinsically axis-aligned sign update.
However, $p_\text{diag}(Q)$ can become as small as $1/d$ in the worst case and will on average (over random orientations) be $p_\text{diag}(Q) \approx 1.57/d$.
(We show these properties in \textsection B.2 of the supplementary material.)
This suggests that the sign update will have difficulties with arbitrarily-rotated eigenbases and crucially relies on the problem being ``close to axis-aligned''.

Secondly, $\mathcal{I}_\textsc{sgd}$ contains the term $\nu^2 \sum_{i=1}^d \lambda_i^3$ in which stochastic noise and the eigenspectrum of the problem \emph{interact}.
$\mathcal{I}_\textsc{ssd}$, on the other hand, has a milder dependence on the eigenvalues of $Q$ and there is no such interaction between noise and eigenspectrum.
The noise only manifests in the element-wise success probabilities $\rho_i$.
%$\rho_i$ are independent of the conditioning\footnote{
%If $Q$ is diagonal then $\frac{\sigma_i^2}{\nabla \L_i^2}=\frac{\nu^2 q_{ii}^2}{(q_{ii}(\theta_i - x^\ast_i))^2} = \frac{\nu^2}{(\theta_i-x^\ast_i)^2}$, independent of $Q$.
%In the general case, it is $\frac{\sigma_i^2}{\nabla \L_i^2} = \frac{\nu^2 \sum_j q_{ij}^2}{\left(\sum_j q_{ij}(\theta_j - x^\ast_j)\right)^2}$, such that $q_{ij}$ contributes in 
%}

In summary, we can expect the sign direction to be beneficial for noisy, ill-conditioned problems with diagonally dominant Hessians.
It is unclear to what extent these properties hold for real problems, on which sign-based methods like \textsc{adam} are usually applied.
\citet{Becker1988} empirically investigated the first property for Hessians of simple neural network training problems and found comparably high values of $p_\text{diag}(Q)=0.1$ up to $p_\text{diag}(Q)=0.6$.
\citet{Chaudhari2016} empirically investigated the eigenspectrum in deep learning problems and found it to be very ill-conditioned with the majority of eigenvalues close to zero and a few very large ones.
However, this empirical evidence is far from conclusive.

\subsection{Experimental Evaluation}
\label{toy_problems}

\begin{figure*}
\centering
\includegraphics{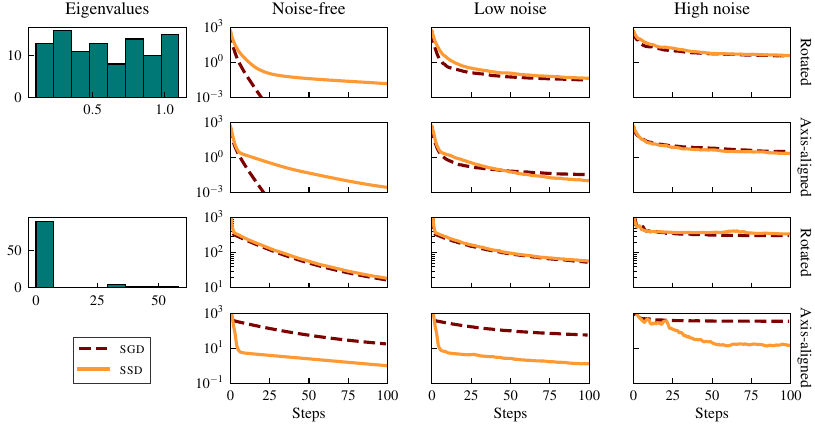}
\caption{Performance of \textsc{sgd} and \textsc{ssd} on stochastic quadratic problems.
Rows correspond to different QPs: the eigenspectrum is shown and each is used with a randomly rotated and an axis-aligned eigenbasis.
Columns correspond to different noise levels.
The individual panels show function value over number of steps.
On the well-conditioned problem, gradient descent vastly outperforms the sign-based method in the noise-free case, but the difference is evened out when noise is added.
The orientation of the eigenbasis had little effect on the comparison in the well-conditioned case.
On the ill-conditioned problem, the methods perform roughly equal when the eigenbasis is randomly rotated.
\textsc{ssd} benefits drastically from an axis-aligned eigenbasis, where it clearly outperforms \textsc{sgd}.
}
\label{fig:toy_problem_quadratic}
\end{figure*}

We verify our findings experimentally on 100-dimensional sQPs.
First, we specify a diagonal matrix $\Lambda\in\mathbb{R}^{100}$ of eigenvalues: (a) a mildly-conditioned problem with values drawn uniformly from $[0.1, 1.1]$ and (b) an ill-conditioned problem with a structured eigenspectrum simulating the one reported by \citet{Chaudhari2016} by uniformly drawing 90\% of the values from $[0,1]$ and 10\% from $[30, 60]$.
$Q$ is then defined as (a) $Q=\Lambda$ for an axis-aligned problem and (b) $Q=R\Lambda R^T$ with a random $R$ drawn uniformly among all rotation matrices \citep[see][]{Diaconis1987}.
This makes four different matrices, which we consider at noise levels $\nu\in\{0, 0.1, 4.0\}$.
We run \textsc{sgd} and \textsc{ssd} with their optimal local step sizes as previously derived.
The results, shown in Fig.~\ref{fig:toy_problem_quadratic}, confirm our theoretical findings.

\section{Variance Adaptation}
\label{variance_adaptation}

We now proceed to the second component of \textsc{adam}: variance-based element-wise step sizes.
Considering this variance adaptation in isolation from the sign aspect naturally suggests to employ it on arbitrary  update directions, for example directly on the stochastic gradient instead of its sign.
A principled motivation arises from the following consideration:

Assume we want to update in a direction $p\in\mathbb{R}^d$ (or $\sign(p)$), but only have access to an estimate $\hat{p}$ with $\mathbf{E}[\hat{p}]=p$.
We allow element-wise factors $\gamma\in\mathbb{R}^d$ and update $\gamma \odot \hat{p}$ (or $\gamma \odot \sign(\hat{p})$).
One way to make ``optimal'' use of these factors is to choose them such as to minimize the expected distance to the desired update direction.

\begin{lemma}
\label{lemma:optimal_va_factors}
Let $\hat{p}\in\mathbb{R}^d$ be a random variable with $\mathbf{E}[\hat{p}]=p$ and $\mathbf{var}[p_i]=\sigma_i^2$.
Then $\mathbf{E}[\Vert \gamma\odot\hat{p} - p\Vert_2^2]$ is minimized by
\begin{equation}
\label{eq:optimal_gamma_no_sign}
\gamma_i = \frac{\mathbf{E}[\hat{p}_i]^2}{\mathbf{E}[\hat{p}_i^2]} = \frac{p_i^2}{p_i^2 + \sigma_i^2} = \frac{1}{1+\sigma_i^2/p_i^2}
\end{equation}
and $\mathbf{E}[\Vert \gamma\odot\sign(\hat{p}) - \sign(p) \Vert_2^2]$ is minimized by
\begin{equation}
\label{eq:optimal_gamma_sign}
\gamma_i = (2\rho_i -1),
\end{equation}
where $\rho_i:=\mathbf{P}[\sign(\hat{p}_i) = \sign(p_i)]$. \hfill (Proof in \textsection B.3)
\end{lemma}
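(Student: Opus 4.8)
The plan is to exploit the fact that the squared Euclidean norm decouples across coordinates, reducing each of the two claims to $d$ independent scalar minimizations, each of which is an elementary convex quadratic in $\gamma_i$.

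First I would write
\begin{equation}
\mathbf{E}[\Vert \gamma\odot\hat{p} - p\Vert_2^2] = \sum_{i=1}^d \mathbf{E}[(\gamma_i \hat{p}_i - p_i)^2],
\end{equation}
so that minimizing the left-hand side amounts to minimizing each summand separately over the corresponding $\gamma_i$. Expanding the $i$-th summand and using $\mathbf{E}[\hat{p}_i] = p_i$ produces the quadratic $\gamma_i^2 \mathbf{E}[\hat{p}_i^2] - 2\gamma_i p_i^2 + p_i^2$, which has positive leading coefficient. Setting its derivative to zero yields $\gamma_i = p_i^2 / \mathbf{E}[\hat{p}_i^2]$, and the second-moment identity $\mathbf{E}[\hat{p}_i^2] = \sigma_i^2 + p_i^2$ then reproduces the three equivalent forms in \eqref{eq:optimal_gamma_no_sign}.

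For the sign version I would proceed identically, writing the objective as a sum of scalar quadratics $\mathbf{E}[(\gamma_i \sign(\hat{p}_i) - \sign(p_i))^2]$. The simplification here rests on $\sign(\hat{p}_i)^2 = 1$ and $\sign(p_i)^2 = 1$ (invoking the convention $\sign(0)=1$), so that each summand collapses to $\gamma_i^2 - 2\gamma_i \mathbf{E}[\sign(\hat{p}_i)\sign(p_i)] + 1$. The one genuinely non-routine step is evaluating the cross term: since the product $\sign(\hat{p}_i)\sign(p_i)$ equals $+1$ exactly when the signs agree (probability $\rho_i$) and $-1$ otherwise, I would obtain $\mathbf{E}[\sign(\hat{p}_i)\sign(p_i)] = \rho_i - (1-\rho_i) = 2\rho_i - 1$. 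Minimizing the resulting quadratic then gives $\gamma_i = 2\rho_i - 1$, which is exactly \eqref{eq:optimal_gamma_sign}.

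The main obstacle, such as it is, is purely the bookkeeping around this cross term in the sign case, together with the care needed to invoke the $\sign(0)=1$ convention so that $\sign(\hat{p}_i)^2$ is identically $1$; everything else is separable convex optimization. I would also briefly note that the positive leading coefficients ($\mathbf{E}[\hat{p}_i^2]$ and $1$, respectively) guarantee the stationary points are global minima, which completes both claims.
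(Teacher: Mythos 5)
Your proposal is correct and follows essentially the same route as the paper's proof in \textsection B.3: coordinate-wise decomposition of the squared norm, expansion into a convex quadratic in each $\gamma_i$, and minimization by setting the derivative to zero, with $\mathbf{E}[\hat{p}_i^2]=p_i^2+\sigma_i^2$ in the first case and $\sign(\cdot)^2=1$ together with the cross-term identity $\mathbf{E}[\sign(\hat{p}_i)\sign(p_i)]=2\rho_i-1$ in the second. The only cosmetic difference is that the paper evaluates $\mathbf{E}[\sign(\hat{p}_i)]=(2\rho_i-1)\sign(p_i)$ first and then multiplies by $\sign(p_i)$, which is equivalent to your direct computation of the product's expectation.
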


\subsection{ADAM as Variance-Adapted Sign Descent}
\label{va_for_sign}

According to Lemma \ref{lemma:optimal_va_factors}, the optimal variance adaptation factors for the sign of a stochastic gradient are $\gamma_i=2\rho_i -1$, where $\rho_i = \mathbf{P}[\sign(g_i)=\sign(\nabla \L_i)]$.
Appealing to intuition, this means that $\gamma_i$ is proportional to the success probability with a maximum of $1$ when we are certain about the sign of the gradient ($\rho_i=1$) and a minimum of $0$ in the absence of information ($\rho_i=0.5$).

Recall from Eq.~\eqref{eq:rho_gaussian} that, under the Gaussian assumption, the success probabilities are $2\rho_i -1 = \erf[(\sqrt{2} \eta_i)^{-1}]$.
Figure~\ref{fig:va_factors} shows that this term is closely approximated by $(1+\eta_i^2)^{-1/2}$, the variance adaptation terms of \textsc{adam}.
Hence, \textsc{adam} can be regarded as an approximate realization of this optimal variance adaptation scheme.
This comes with the caveat that \textsc{adam} applies these factors to $\sign(m_t)$ instead of $\sign(g_t)$.
Variance adaptation for $m_t$ will be discussed further in \textsection\ref{incorporating_momentum} and in the supplements \textsection C.2.

\begin{figure}
\centering
\includegraphics{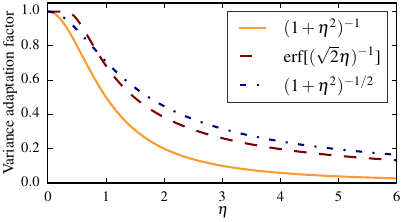}
\caption{Variance adaptation factors as functions of the relative standard deviation $\eta$.
The optimal factor for the sign of a (Gaussian) stochastic gradient is $\erf[(\sqrt{2}\eta)^{-1}]$, which is closely approximated by $(1+\eta^2)^{-1/2}$, the factor implicitly employed by \textsc{adam}. $(1+\eta^2)^{-1}$ is the optimal factor for a stochastic gradient.}
\label{fig:va_factors}
\end{figure}

\subsection{Stochastic Variance-Adapted Gradient (SVAG)}
\label{va_for_grad}

Applying Eq.~\eqref{eq:optimal_gamma_no_sign} to $\hat{p}=g$, the optimal variance adaptation factors for a stochastic gradient are found to be
\begin{equation}
\label{eq:optimal_gamma_sgd}
\gamma^g_i = \frac{\nabla \L_i^2}{\nabla \L_i^2 + \sigma_i^2} =  \frac{1}{1+ \sigma_i^2/\nabla \L_i^2} =  \frac{1}{1+ \eta_i^2}.
\end{equation}
A term of this form also appears, together with diagonal curvature estimates, in \citet{Schaul2013}.
We refer to the method updating along $\gamma^g \odot g$ as \emph{Stochastic Variance-Adapted Gradient} (\textsc{svag}).
To support intuition, Fig.~\ref{fig:conceptual_sketch} shows a conceptual sketch of this variance adaptation scheme.

Variance adaptation of this form guarantees convergence \emph{without} manually decreasing the global step size.
We recover the $\mathcal{O}(1/t)$ rate of \textsc{sgd} for smooth, strongly convex functions.
We emphasize that this result considers an \emph{idealized} version of \textsc{svag} with exact $\gamma^g_i$.
It should be considered as a motivation for this variance adaptation strategy, not a statement about its performance with estimated variance adaptation factors.
\begin{theorem}
\label{theorem:convergence_svag}
Let $f\colon\mathbb{R}^d\to\mathbb{R}$ be $\mu$-strongly convex and $L$-smooth.
We update $\theta_{t+1} = \theta_t - \alpha (\gamma_t \odot g_t)$, with stochastic gradients $\mathbf{E}[g_t\vert\theta_t]=\nabla f_t$, $\mathbf{var}[g_{t,i}\vert \theta_t]=\sigma_{t,i}^2$, variance adaptation factors $\gamma_{t,i}=\nabla f_{t,i}^2/(\nabla f_{t,i}^2 + \sigma_{t,i}^2)$, and a global step size $\alpha=1/L$. Assume that there are constants $c_v, M_v>0$ such that $\sum_{i=1}^d \sigma_{t,i}^2 \leq c_v \Vert \nabla f_t\Vert^2 + M_v$. Then
\begin{equation}
\mathbf{E}[f(\theta_t)-f_\ast] \in \mathcal{O}\left( \frac{1}{t} \right),
\end{equation}
where $f_\ast$ is the minimum value of $f$. \hfill(Proof in \textsection B.4)
\end{theorem}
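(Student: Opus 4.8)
The plan is to combine a one-step descent bound from $L$-smoothness with the strong-convexity inequality, and then to analyze the resulting scalar recursion. Write $\Delta_t := f(\theta_t) - f_\ast$. Starting from $f(\theta_{t+1}) \le f(\theta_t) + \nabla f_t^T(\theta_{t+1}-\theta_t) + \tfrac{L}{2}\Vert\theta_{t+1}-\theta_t\Vert^2$ and substituting the update $\theta_{t+1}-\theta_t = -\alpha(\gamma_t\odot g_t)$, I would take the expectation conditional on $\theta_t$. Since the idealized factors $\gamma_t$ are deterministic given $\theta_t$, they pull out of the expectation, and I can use $\mathbf{E}[g_t\vert\theta_t]=\nabla f_t$ together with $\mathbf{E}[g_{t,i}^2\vert\theta_t]=\nabla f_{t,i}^2+\sigma_{t,i}^2$. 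The key observation is that, with $\gamma_{t,i}=\nabla f_{t,i}^2/(\nabla f_{t,i}^2+\sigma_{t,i}^2)$, both the first-order term $\nabla f_t^T(\gamma_t\odot\nabla f_t)$ and the second-order term $\mathbf{E}[\Vert\gamma_t\odot g_t\Vert^2\vert\theta_t]$ collapse to the same quantity $S_t := \sum_i \nabla f_{t,i}^4/(\nabla f_{t,i}^2+\sigma_{t,i}^2)$. With $\alpha=1/L$ the two contributions combine into $\mathbf{E}[\Delta_{t+1}\vert\theta_t] \le \Delta_t - \tfrac{1}{2L}S_t$.

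Next I would lower-bound $S_t$. Applying Cauchy--Schwarz in the form $(\sum_i a_i)^2 \le (\sum_i a_i^2/b_i)(\sum_i b_i)$ with $a_i = \nabla f_{t,i}^2$ and $b_i = \nabla f_{t,i}^2+\sigma_{t,i}^2$ yields $S_t \ge \Vert\nabla f_t\Vert^4/(\Vert\nabla f_t\Vert^2 + \sum_i\sigma_{t,i}^2)$. The noise assumption $\sum_i\sigma_{t,i}^2 \le c_v\Vert\nabla f_t\Vert^2 + M_v$ bounds the denominator, and since the resulting expression is increasing in $\Vert\nabla f_t\Vert^2$, the strong-convexity bound $\Vert\nabla f_t\Vert^2 \ge 2\mu\Delta_t$ gives $S_t \ge 4\mu^2\Delta_t^2/(2\mu(1+c_v)\Delta_t + M_v)$. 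Substituting back produces a per-step inequality of the form $\mathbf{E}[\Delta_{t+1}\vert\theta_t] \le \Delta_t - C\,\phi(\Delta_t)$ with $\phi(\Delta)=\Delta^2/(A\Delta+B)$, where $A=2\mu(1+c_v)$, $B=M_v$, and $C=2\mu^2/L$.

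To pass to the full expectation $e_t := \mathbf{E}[\Delta_t]$, I would note that $\phi$ is convex on $[0,\infty)$ (it splits as a linear term plus $(B^2/A^2)/(A\Delta+B)$, the latter convex since $1/(A\Delta+B)$ is), so Jensen's inequality gives $\mathbf{E}[\phi(\Delta_t)] \ge \phi(e_t)$ and hence the deterministic recursion $e_{t+1} \le e_t - C\,e_t^2/(Ae_t+B)$. Dividing the gap $e_t - e_{t+1}$ by $e_te_{t+1}$ and using $e_{t+1}\le e_t$ turns this into $1/e_{t+1} - 1/e_t \ge C/(Ae_t+B) \ge C/(Ae_0+B)$, where the last step uses that $e_t$ is nonincreasing and hence bounded by $e_0$. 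Telescoping yields $1/e_t \ge 1/e_0 + Ct/(Ae_0+B)$, i.e. $e_t \le (Ae_0+B)/(Ct)$, which is $\mathcal{O}(1/t)$.

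The most delicate part is not the one-step bound — the exact cancellation that makes both the descent and the noise term equal $S_t$ is precisely what makes the idealized factors convenient — but converting the per-step inequality into a rate. The recursion $e_{t+1}\le e_t - Ce_t^2/(Ae_t+B)$ interpolates between a geometric decrease while $\Delta_t$ is large (so $B$ is negligible) and a genuine $1/t$ decay once noise dominates (so $A\Delta_t$ is negligible); handling both regimes uniformly is exactly what the reciprocal-telescoping argument accomplishes, and it crucially relies on the convexity of $\phi$ to justify the Jensen step. I would also verify that $e_t>0$ throughout (otherwise the optimum is reached in expectation and the claim is trivial) and double-check that the Cauchy--Schwarz bound and the monotonicity in $\Vert\nabla f_t\Vert^2$ are invoked in the correct direction.
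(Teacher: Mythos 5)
Your proposal is correct and follows essentially the same route as the paper's proof in \textsection B.4: the one-step smoothness bound collapsing to $\sum_i \nabla f_{t,i}^4/(\nabla f_{t,i}^2+\sigma_{t,i}^2)$, the lower bound $\Vert\nabla f_t\Vert^4/(\Vert\nabla f_t\Vert^2+\sum_i\sigma_{t,i}^2)$ (you via Cauchy--Schwarz, the paper via an equivalent Jensen step), the strong-convexity and noise bounds, Jensen for the convex $\phi$, and the reciprocal-telescoping finish. The only differences are cosmetic --- you bound the denominator using monotonicity in $\Vert\nabla f_t\Vert^2$ where the paper invokes the upper bound $\Vert\nabla f_t\Vert^2\le (2L^2/\mu)e_t$, and you invert before applying $e_t\le e_0$ rather than after --- neither of which changes the argument.
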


The assumption $\sum_{i=1}^d \sigma_{t,i}^2 \leq c_v \Vert \nabla f_t\Vert^2 + M_v$ is a mild restriction on the variances, allowing them to be non-zero everywhere and to grow quadratically in the gradient norm.

\begin{figure}
\centering
\includegraphics[scale=1.0]{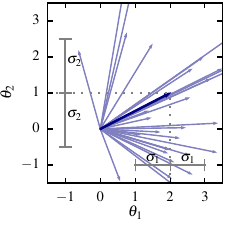}
\hspace{8pt}
\includegraphics[scale=1.0]{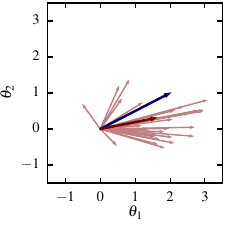}
\caption{Conceptual sketch of variance-adapted stochastic gradients.
The left panel shows the true gradient $\nabla\L=(2,1)$ and stochastic gradients scattered around it with $(\sigma_1, \sigma_2)=(1, 1.5)$.
In the right panel, we scale the $i$-th coordinate by $(1+\eta_i^2)^{-1}$.
In this example, the $\theta_2$-coordinate has much higher relative variance ($\eta_2^2 = 2.25$) than the $\theta_1$-coordinate ($\eta^2_1 = 0.25$) and is thus shortened.
This reduces the variance of the update direction at the expense of biasing it away from the true gradient in expectation.
}
\label{fig:conceptual_sketch}
\end{figure}

\section{Practical Implementation of M-SVAG}

\label{practical_implementation}

Section \ref{variance_adaptation} has introduced the general idea of variance adaptation; we now discuss its practical implementation.
For the sake of a concise presentation, we focus on one particular variance-adapted method, \textsc{m-svag}, which applies variance adaptation to the update direction $m_t$.
This method is of particular interest due to its relationship to \textsc{adam} outlined in Figure \ref{fig:methods_overview}.
Many of the following considerations correspondingly apply to other variance-adapted methods, e.g., \textsc{svag} and variants of \textsc{adam}, some of which are discussed and evaluated in the supplementary material (\textsection C).

\subsection{Estimating Gradient Variance}
\label{estimating_variance_with_moving_averages}

In practice, the optimal variance adaptation factors are unknown and have to be estimated.
A key ingredient is an estimate of the stochastic gradient variance.
We have argued in the introduction that \textsc{adam} obtains such an estimate from moving averages, $\sigma_{t,i}^2 \approx v_{t,i} - m_{t,i}^2$.
The underlying assumption is that the distribution of stochastic gradients is approximately constant over the effective time horizon of the exponential moving average, making $m_t$ and $v_t$ estimates of the first and second moment of $g_t$, respectively:
\begin{assumption}
\label{assumption:iid_grads}
At step $t$, assume 
\begin{equation}
\mathbf{E}[m_{t,i}] \approx \nabla\L_{t,i}, \quad \mathbf{E}[v_{t,i}] \approx  \nabla\L_{t,i}^2 + \sigma^2_{t,i}.
\end{equation}
\end{assumption}
While this can only ever hold approximately, Assumption \ref{assumption:iid_grads} is the tool we need to obtain gradient variance estimates from past gradient observations.
It will be more realistic in the case of high noise and small step size, where the variation between successive stochastic gradients is dominated by stochasticity rather than change in the true gradient. 

We make two modifications to \textsc{adam}'s variance estimate.
First, we will use the same moving average constant $\beta_1=\beta_2=\beta$ for $m_t$ and $v_t$.
This constant should define the effective range for which we implicitly assume the stochastic gradients to come from the same distribution, making different constants for the first and second moment implausible.

Secondly, we adapt for a systematic bias in the variance estimate.
As we show in \textsection B.5, under Assumption 1,
\begin{gather}
\label{eq:expectation_of_m_squared}
\mathbf{E}[m_{t,i}^2] \approx \nabla \L_{t,i}^2 + \rho(\beta, t) \sigma_{t,i}^2,\\
\label{eq:definition_rho}
\rho(\beta, t) := \frac{(1-\beta)(1+\beta^{t+1})}{(1+\beta)(1-\beta^{t+1})},
\end{gather}
and consequently $\mathbf{E}[v_{t,i}-m_{t,i}^2] \approx (1-\rho(\beta, t)) \, \sigma_{t,i}^2$.
We correct for this bias and use the variance estimate
\begin{equation}
\label{eq:bias_corrected_ema_estimate}
\hat{s}_t := \frac{1}{1-\rho(\beta, t)} (v_t - m_t^2).
\end{equation}

\emph{Mini-Batch Gradient Variance Estimates:}
An alternative variance estimate can be computed locally ``within'' a single mini-batch, see \textsection D of the supplements.
We have experimented with both estimators and found the resulting methods to have similar performance.
For the main paper, we stick to the moving average variant for its ease of implementation and direct correspondence with \textsc{adam}.
We present experiments with the mini-batch variant in the supplementary material.
These demonstrate the merit of variance adaptation \emph{irrespective} of how the variance is estimated.

\subsection{Estimating the Variance Adaptation Factors}
\label{estimating_svag_factors}

The gradient variance itself is not of primary interest; we have to estimate the variance adaptation factors, given by Eq.~\eqref{eq:optimal_gamma_sgd} in the case of \textsc{svag}.
We propose to use the estimate
\begin{equation}
\label{eq:estimated_va_factors_grad}
\hat{\gamma}^g_t = \frac{1}{1	+ \hat{s}_t/m_t^2} = \frac{m_t^2}{m_t^2 + \hat{s}_t}.
\end{equation}
While $\hat{\gamma}^g_t$ is an intuitive quantity, it is \emph{not} an unbiased estimate of the exact variance adaptation factors as defined in Eq.~\eqref{eq:optimal_gamma_sgd}.
To our knowledge, unbiased estimation of the exact factors is intractable.
We have experimented with several partial bias correction terms but found them to have destabilizing effects.

%Note that using $v_t - m_t^2$ as a variance estimate instead of $\hat{s}_t$ would result in $\hat{\gamma_t} = m_t^2/v_t$, similar to Adam.

\subsection{Incorporating Momentum}
\label{incorporating_momentum}

So far, we have considered variance adaptation for the update direction $g_t$.
In practice, we may want to update in the direction of $m_t$ to incorporate momentum.\footnote{
Our use of the term \emph{momentum} is somewhat colloquial.
To highlight the relationship with \textsc{adam} (Fig.~\ref{fig:methods_overview}), we have defined \textsc{m-sgd} as the method using the update direction $m_t$, which is a rescaled version of \textsc{sgd} with momentum.
\textsc{m-svag} applies variance adaptation to $m_t$.
This is not to be confused with the application of momentum acceleration \citep{Polyak1964, Nesterov1983} \emph{on top} of a \textsc{svag} update.
}
According to Lemma \ref{lemma:optimal_va_factors}, the variance adaptation factors should then be determined by the relative of variance of $m_t$.
%What is the relative variance of $m_t$ and how can it be estimated? 

Once more adopting Assumption \ref{assumption:iid_grads}, we  have $\mathbf{E}[m_t]\approx \nabla \L_t$ and $\mathbf{var}[m_{t,i}] \approx \rho(\beta, t) \sigma_{t,i}^2$, the latter being due to Eq.~\eqref{eq:expectation_of_m_squared}.
Hence, the relative variance of $m_t$ is $\rho(\beta, t)$ times that of $g_t$, such that the optimal variance adaptation factors for the update direction $m_t$ according to Lemma \ref{lemma:optimal_va_factors} are
\begin{equation}
\gamma_{t,i}^m = \frac{1}{1+\rho(\beta, t) \sigma_{t,i}^2/\nabla \L_{t,i}^2}.
\end{equation}
We use the following estimate thereof:
\begin{equation}
\label{eq:estimated_va_factors_msvag}
\hat{\gamma}^m_t = \frac{1}{1	+  \rho(\beta, t) \, \hat{s}_t/m_t^2 } = \frac{m_t^2}{m_t^2 + \rho(\beta, t) \, \hat{s}_t}.
\end{equation}
Note that $m_t$ now serves a double purpose:
It determines the base update direction and, at the same time, is used to obtain an estimate of the gradient variance.

%\emph{Remark:}
%Applying Assumption \ref{assumption:iid_grads} to $m_t$ in the above manner touches on a fundamental issue: Is $m_t$ a refined estimate of the local gradient $\nabla \L(\theta_t)$ or a rescaled version of a momentum update direction, as introduced for the \emph{non-stochastic} case by \citet{Polyak1964}.
%The truth probably lies in between.
%In the context of using Assumption \ref{assumption:iid_grads} for the gradient variance estimate, it is natural to adopt the former point of view.

\subsection{Details}
\label{implementation_details}

Note that Eq.~\eqref{eq:bias_corrected_ema_estimate} is ill-defined for $t=0$, since $\rho(\beta,0)=0$.
We use $\hat{s}_0=0$ for the first iteration, making the initial step of \textsc{m-svag} coincide with an \textsc{sgd}-step.
One final detail concerns a possible division by zero in Eq.~\eqref{eq:estimated_va_factors_msvag}.
Unlike \textsc{adam}, we do not add a constant offset $\varepsilon$ in the denominator.
A division by zero only occurs when $m_{t,i}=v_{t,i}=0$;
we check for this case and perform no update, since $m_{t,i}=0$.

This completes the description of our implementation of \textsc{m-svag}.
Alg.~\ref{alg:msvag_ema} provides pseudo-code (ignoring the details discussed in \textsection\ref{implementation_details} for readability).

\begin{algorithm}
\footnotesize
\caption{\textsc{m-svag}}
\label{alg:msvag_ema}
\begin{algorithmic}
\STATE {\bfseries Input:} $\theta_0\in\mathbb{R}^d$, $\alpha>0$, $\beta \in [0,1]$, $T\in\mathbb{N}$
\STATE Initialize $\theta\gets \theta_0$, $\tilde{m}\gets 0$, $\tilde{v}\gets 0$
\FOR{$t=0,\dotsc, T-1$}
  \STATE $\tilde{m}\gets \beta \tilde{m} + (1-\beta) g(\theta)$, \quad $\tilde{v}\gets \beta \tilde{v} + (1-\beta)g(\theta)^2$
  \vspace{1pt}
  \STATE $m \gets (1-\beta^{t+1})^{-1} \tilde{m}$, \quad $v \gets (1-\beta^{t+1})^{-1}\tilde{v}$
  \vspace{1pt}
  \STATE $s\gets (1-\rho(\beta, t))^{-1} (v-m^2)$ %\hfill (Eq.~\ref{eq:definition_rho})
  \vspace{1pt}
  \STATE $\gamma \gets m^2/(m^2 + \rho(\beta, t) s)$
  \vspace{1pt}
  \STATE $\theta \gets \theta - \alpha (\gamma\odot m)$
\ENDFOR
\end{algorithmic}
\emph{Note:} \textsc{m-svag} exposes two hyperparameters, $\alpha$ and $\beta$.
\end{algorithm}

\section{Connection to Generalization}

\label{connection_to_generalization}

%In deep learning, different solutions with comparable training loss can have significantly varying generalization performance.
%Recently, the question of the effect of the optimization algorithm on \emph{generalization} has received increased attention.

Of late, the question of the effect of the optimization algorithm on \emph{generalization} has received increased attention. 
Especially in deep learning, different optimizers might find solutions with varying generalization performance.
Recently, \mbox{\citet{Wilson2017}} have argued that ``adaptive methods'' (referring to \textsc{adagrad}, \textsc{rmsprop}, and \textsc{adam}) have adverse effects on generalization compared to ``non-adaptive methods'' (gradient descent, \textsc{sgd}, and their momentum variants).
In addition to an extensive empirical validation of that claim, the authors make a theoretical argument using a binary least-squares classification problem,
\begin{equation}
\label{eq:least_squares_classification}
R(\theta) = \frac{1}{n} \sum_{i=1}^n \frac{1}{2} (x_i^T \theta - y_i)^2 = \frac{1}{2n} \Vert X\theta - y \Vert^2,
\end{equation}
with $n$ data points $(x_i, y_i) \in  \mathbb{R}^d\times \{\pm 1\}$, stacked in a matrix $X\in\mathbb{R}^{n\times d}$ and a label vector $y\in\{\pm 1\}^n$.
For this problem class, the non-adaptive methods provably converge to the max-margin solution, which we expect to have favorable generalization properties.
In contrast to that, \citet{Wilson2017} show that---for \emph{some} instances of this problem class--- the adaptive methods converge to solutions that generalize arbitrarily bad to unseen data.
The authors construct such problematic instances using the following Lemma.
\begin{lemma}[Lemma 3.1 in \citet{Wilson2017}]
\label{lemma:wilson_lemma}
Suppose $[X^T y]_i\neq 0$ for $i=1,\dotsc, d$, and there exists $c\in\mathbb{R}$ such that $X \sign(X^T y) = cy$. Then, when initialized at $\theta_0 = 0$, the iterates generated by full-batch \textsc{adagrad}, \textsc{adam}, and \textsc{rmsprop} on the objective \eqref{eq:least_squares_classification} satisfy $\theta_t \propto \sign(X^T y)$.
\end{lemma}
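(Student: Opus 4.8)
The plan is to prove by induction on $t$ that every iterate lies on the ray spanned by $u := \sign(X^T y)$, i.e.\ that $\theta_t = \lambda_t u$ for some scalar $\lambda_t\in\mathbb{R}$, starting from $\lambda_0 = 0$. The engine of the whole argument is one structural observation about the quadratic objective \eqref{eq:least_squares_classification}, whose gradient is $\nabla R(\theta) = \frac{1}{n} X^T(X\theta - y)$: this gradient maps the ray $\{\lambda u\}$ into the ray spanned by $X^T y$. Indeed, invoking the hypothesis $Xu = X\sign(X^T y) = cy$,
\begin{equation*}
\nabla R(\lambda u) = \frac{1}{n} X^T(\lambda X u - y) = \frac{\lambda c - 1}{n}\, X^T y,
\end{equation*}
so the gradient at any point of the ray is a scalar multiple of $X^T y$; in particular its $i$-th sign equals $\sign(\lambda c - 1)\,u_i$, a global sign times $u$.

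The induction then runs as follows. Assuming $\theta_s = \lambda_s u$ for all $s \le t$, each past full-batch gradient $\nabla R(\theta_s)$ equals $c_s X^T y$ for a scalar $c_s$. The first-moment quantity driving the step (the raw gradient for \textsc{adagrad} and \textsc{rmsprop}, the exponential average $m_t$ for \textsc{adam}) is a weighted sum of these gradients, hence again a scalar multiple of $X^T y$, say $a_t X^T y$. The second-moment accumulator $v_t$, being a weighted sum of squared gradients (with or without decay), is coordinate-wise $v_{t,i} = b_t [X^T y]_i^2$ for a common scalar $b_t \ge 0$, so that $\sqrt{v_{t,i}} = \sqrt{b_t}\,\lvert [X^T y]_i\rvert$.

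The cancellation at the heart of the claim now falls out. Ignoring the stabilizer $\varepsilon$ (which I take to be $0$, since exact proportionality depends on it), the $i$-th update direction is
\begin{equation*}
\frac{(a_t X^T y)_i}{\sqrt{v_{t,i}}} = \frac{a_t [X^T y]_i}{\sqrt{b_t}\,\lvert [X^T y]_i\rvert} = \frac{a_t}{\sqrt{b_t}}\,\sign([X^T y]_i) = \frac{a_t}{\sqrt{b_t}}\, u_i,
\end{equation*}
where the assumption $[X^T y]_i \neq 0$ guarantees both the sign and the division are well defined, and where $b_t>0$ because the first gradient $-\frac1n X^T y$ is nonzero. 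Thus the whole update is a scalar multiple of $u$, and $\theta_{t+1} = \theta_t - \alpha\,\tfrac{a_t}{\sqrt{b_t}}\, u$ stays on the ray, closing the induction and yielding $\theta_t \propto \sign(X^T y)$.

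I expect the main obstacle to be bookkeeping rather than anything conceptual: one must instantiate the two facts ``first-moment term $\propto X^T y$'' and ``$v_{t,i} \propto [X^T y]_i^2$'' separately for each of the three algorithms, accounting for their differing accumulation rules (a plain running sum for \textsc{adagrad}, exponential decay for \textsc{rmsprop}, decayed averages with bias correction for \textsc{adam}) and checking that the bias-correction factors, being coordinate-independent scalars, leave the cancellation intact. The one genuine caveat is the role of $\varepsilon$: with $\varepsilon > 0$ the denominators $\sqrt{b_t}\,\lvert [X^T y]_i\rvert + \varepsilon$ differ across coordinates and exact proportionality is lost, so the statement must be read with $\varepsilon = 0$.
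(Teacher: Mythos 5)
Your proposal is correct and follows essentially the same route as the source: the paper itself only cites this lemma from \citet{Wilson2017} and proves the analogous sign-descent extension in \textsection B.6, but both that proof and the original rest on exactly your inductive hypothesis $\theta_t = \lambda_t \sign(X^T y)$ together with the computation $\nabla R(\lambda_t \sign(X^Ty)) = \frac{1}{n}(\lambda_t c - 1)X^Ty$, after which the element-wise ratio of first to (square-rooted) second moment cancels magnitudes and leaves a global scalar times $\sign(X^Ty)$. Your caveat about $\varepsilon$ is also consistent with how the result is stated in the original ($\varepsilon=0$).
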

Intriguingly, as we show in \textsection B.6 of the supplementary material, this statement easily extends to sign descent, i.e., the method updating $\theta_{t+1} = \theta_t - \alpha \sign(\nabla R(\theta_t))$.
\begin{lemma}
\label{lemma:wilson_lemma_extended_to_sd}
Under the assumptions of Lemma \ref{lemma:wilson_lemma}, the iterates generated by sign descent satisfy $\theta_t \propto \sign(X^T y)$.
\end{lemma}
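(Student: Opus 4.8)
The plan is to prove by induction on $t$ that every iterate lies on the ray spanned by $u := \sign(X^T y)$, i.e., $\theta_t = \lambda_t u$ for some scalar $\lambda_t$. The base case is immediate because $\theta_0 = 0 = 0 \cdot u$. The crux of the argument is that the sign map sends any nonzero scalar multiple of a fixed vector back to that vector up to an overall $\pm 1$; so it suffices to show that the gradient evaluated at a point on this ray is a scalar multiple of $X^T y$.

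For the inductive step I would start from the gradient of the objective, $\nabla R(\theta) = \tfrac{1}{n} X^T(X\theta - y)$. Substituting $\theta_t = \lambda_t u$ and invoking the structural hypothesis $X u = X\sign(X^T y) = c y$ gives $X\theta_t = \lambda_t c\, y$, so that
\begin{equation}
\nabla R(\theta_t) = \frac{1}{n} X^T(\lambda_t c\, y - y) = \frac{\lambda_t c - 1}{n}\, X^T y .
\end{equation}
Since $n > 0$ and each entry $[X^T y]_i$ is nonzero by assumption, taking the sign (for $\lambda_t c - 1 \neq 0$) yields $\sign(\nabla R(\theta_t)) = \sign(\lambda_t c - 1)\, u$, a scalar $\pm 1$ times $u$. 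The sign-descent update then gives $\theta_{t+1} = \lambda_t u - \alpha\,\sign(\lambda_t c - 1)\, u = \lambda_{t+1} u$ with $\lambda_{t+1} := \lambda_t - \alpha\,\sign(\lambda_t c - 1)$, which closes the induction and establishes $\theta_t \propto u$ for all $t$.

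The one delicate point, which I would address explicitly, is the degenerate case $\lambda_t c = 1$, in which the gradient vanishes identically; under the convention $\sign(0) = 1$ this would yield an all-ones update and break proportionality. I would resolve it by noting that a vanishing gradient means the iterate has already reached the minimizer along this ray (and is itself $\propto u$), and that for generic step size $\alpha$ the value $\lambda_t = 1/c$ is never attained exactly, since $\lambda_t$ advances in increments of $\pm\alpha$ from $\lambda_0 = 0$. It is worth observing that this argument is in fact simpler than the \textsc{adagrad}/\textsc{adam} case of Lemma \ref{lemma:wilson_lemma}: with no preconditioner, the sign of the gradient is literally $\pm u$ at every step, so no per-coordinate scaling needs to be tracked.
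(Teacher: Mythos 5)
Your proof is correct and follows essentially the same route as the paper's: induction on $\theta_t = \lambda_t \sign(X^Ty)$, the same gradient computation $\nabla R(\theta_t) = \tfrac{1}{n}(\lambda_t c - 1)X^Ty$, and the same recursion $\lambda_{t+1} = \lambda_t - \alpha\,\sign(\lambda_t c - 1)$. Your explicit handling of the degenerate case $\lambda_t c = 1$ is a small but welcome addition that the paper's proof glosses over.
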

On the other hand, this does \emph{not} extend to \textsc{m-svag}, an \emph{adaptive} method by any standard.
As noted before, the first step of \textsc{m-svag} coincides with a gradient descent step.
The iterates generated by \textsc{m-svag} will, thus, not generally be proportional to $\sign(X^Ty)$. 
While this does by no means imply that it converges to the max-margin solution or has otherwise favorable generalization properties, the construction of \citet{Wilson2017} does \emph{not} apply to \textsc{m-svag}.

This suggests that it is the sign that impedes generalization in the examples constructed by \citet{Wilson2017}, rather than the element-wise adaptivity as such.
Our experiments substantiate this suspicion.
The fact that all currently popular adaptive methods are also sign-based has led to a conflation of these two aspects.
The main motivation for this work was to disentangle them.

\section{Experiments}
\label{experiments}

We experimentally compare \textsc{m-svag} and \textsc{adam} to their non-variance-adapted counterparts \textsc{m-sgd} and \textsc{m-ssd} (Alg.~\ref{alg:msgd_and_mssd}).
Since these are the four possible recombinations of the sign and the variance adaptation (Fig.~\ref{fig:methods_overview}), this comparison allows us to separate the effects of the two aspects.

\begin{algorithm}[t]
\footnotesize
\caption{\colorbox{blu!20}{\textsc{m-sgd}} and \colorbox{mpg!50}{\textsc{m-ssd}}}
\label{alg:msgd_and_mssd}
\begin{algorithmic}
\STATE {\bfseries Input:} $\theta_0\in\mathbb{R}^d$, $\alpha>0$, $\beta\in [0,1]$, $T\in\mathbb{N}$
\STATE Initialize $\theta \gets \theta_0$, $\tilde{m}\gets 0$
\FOR{$t=0,\dotsc, T-1$}
  \STATE $\tilde{m}\gets \beta \tilde{m} + (1-\beta) g(\theta)$
  \STATE \colorbox{blu!20}{$m\gets (1-\beta^{t+1})^{-1} \tilde{m}$}
  \STATE \colorbox{blu!20}{$\theta \gets \theta - \alpha m$} \quad \colorbox{mpg!50}{$\theta \gets \theta - \alpha \sign(\tilde{m})$}
\ENDFOR
\end{algorithmic}
\end{algorithm}

\subsection{Experimental Set-Up}

We evaluated the four methods on the following problems:
\begin{itemize}
\item[P1] A vanilla convolutional neural network (CNN) with two convolutional and two fully-connected layers on the Fashion-\textsc{mnist} data set \citep{Xiao2017}.
\item[P2] A vanilla CNN with three convolutional and three fully-connected layers on \textsc{cifar-10} \citep{Krizhevsky2009}.
\item[P3] The wide residual network WRN-40-4 architecture of \citet{Zagoruyko2016} on \textsc{cifar-100}.
\item[P4] A two-layer LSTM \citep{Hochreiter1997} for character-level language modelling on Tolstoy's \emph{War and Peace}.
\end{itemize}
A detailed description of all network architectures has been moved to \textsection A of the supplementary material.

For all experiments, we used $\beta=0.9$ for \textsc{m-sgd}, \textsc{m-ssd} and \textsc{m-svag} and default parameters ($\beta_1=0.9, \beta_2=0.999, \varepsilon=10^{-8}$) for \textsc{adam}.
The global step size $\alpha$ was tuned for each method individually by first finding the maximal stable step size by trial and error, then searching downwards.
We selected the one that yielded maximal test accuracy within a fixed number of training steps; a scenario close to an actual application of the methods by a practitioner.
(Loss and accuracy have been evaluated at a fixed interval on the full test set as well as on an equally-sized portion of the training set).
Experiments with the best step size have been replicated ten times with different random seeds.
While (P1) and (P2) were trained with constant $\alpha$, we used a decrease schedule for (P3) and (P4), which was fixed in advance for all methods.
Full details can be found in \textsection A of the supplements.

\begin{figure*}[t]
\centering
\includegraphics{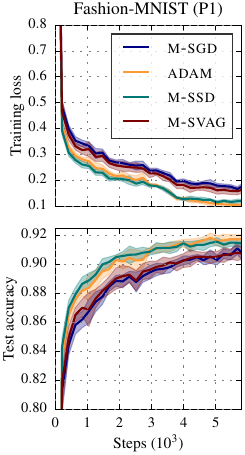}
\includegraphics{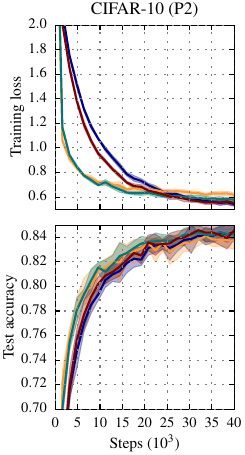}
\includegraphics{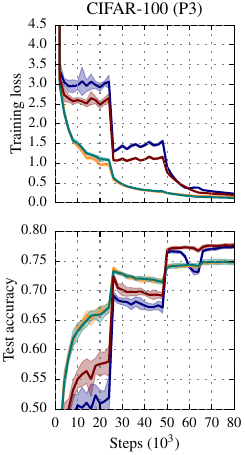}
\includegraphics{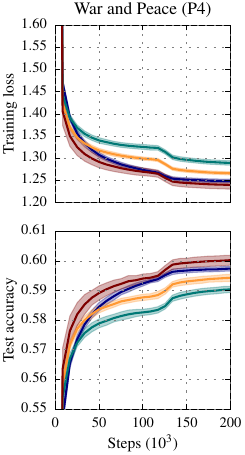}
% Table for test accs
%{\footnotesize
%\centering
%\textbf{Test accuracies}\\[4pt]
%\begin{tabularx}{\textwidth}{XX|XX|XX|XX}
%\textsc{m-sgd} & $0$ & \textsc{m-sgd} & $0$ & \textsc{m-sgd} & $0$ & \textsc{m-sgd} & $0$ \\
%\textsc{adam} & $0$ & \textsc{adam} & $0$ & \textsc{adam} & $0$ & \textsc{adam} & $0$ \\
%\textsc{m-ssd} & $0$ & \textsc{m-ssd} & $0$ & \textsc{m-ssd} & $0$ & \textsc{m-ssd} & $0$ \\
%\textsc{m-svag} & $0$ & \textsc{m-svag} & $0$ & \textsc{m-svag} & $0$ & \textsc{m-svag} & $0$ \\
%\end{tabularx}
%}
\caption{Experimental results on the four test problems.
Plots display training loss and test accuracy over the number of steps.
Curves for the different optimization methods are color-coded.
The shaded area spans one standard deviation, obtained from ten replications with different random seeds.}
\label{fig:results}
\end{figure*}

%\FloatBarrier

\subsection{Results}

Fig.~\ref{fig:results} shows results.
We make four main observations.

\paragraph{1) The sign aspect dominates}
With the exception of (P4), the performance of the four methods distinctly clusters into sign-based and non-sign-based methods.
Of the two components of \textsc{adam} identified in \textsection\ref{new_perspective_on_adam}, the sign aspect seems to be by far the dominant one, accounting for most of the difference between \textsc{adam} and \textsc{m-sgd}.
\textsc{adam} and \textsc{m-ssd} display surprisingly similar performance; an observation that might inform practitioners' choice of algorithm, especially for very high-dimensional problems, where \textsc{adam}'s additional memory requirements are an issue.

\paragraph{2) The usefulness of the sign is problem-dependent}
Considering only training loss, the two sign-based methods clearly outperform the two non-sign-based methods on problems (P1) and (P3).
On (P2), \textsc{adam} and \textsc{m-ssd} make rapid initial progress, but later plateau and are undercut by \textsc{m-sgd} and \textsc{m-svag}.
On the language modelling task (P4) the non-sign-based methods show superior performance.
Relating to our analysis in Section \ref{why_sign}, this shows that the usefulness of sign-based methods depends on the particular problem at hand.

\paragraph{3) Variance adaptation helps}
In all experiments, the variance-adapted variants perform at least as good as, and often better than, their ``base algorithms''.
The magnitude of the effect varies. For example, \textsc{adam} and \textsc{m-ssd} have identical performance on (P3), but \textsc{m-svag} significantly outperforms \textsc{m-sgd} on (P3) as well as (P4).

\paragraph{4) Generalization effects are caused by the sign}
The \textsc{cifar-100} example (P3) displays similar effects as reported by \citet{Wilson2017}: \textsc{adam} vastly outperforms \textsc{m-sgd} in training loss, but has significantly worse test performance.
Observe that \textsc{m-ssd} behaves almost identical to \textsc{adam} in both train and test and, thus, displays the same generalization-harming effects.
\textsc{m-svag}, on the other hand, improves upon \textsc{m-sgd} and, in particular, does not display any adverse effects on generalization.
This corroborates the suspicion raised in \textsection\ref{connection_to_generalization} that the generalization-harming effects of \textsc{adam} are caused by the sign aspect rather than the element-wise adaptive step sizes.

\section{Conclusion}

We have argued that \textsc{adam} combines two components: taking signs and variance adaptation.
Our experiments show that the sign aspect is by far the dominant one, but its usefulness is problem-dependent.
Our theoretical analysis suggests that it depends on the interplay of stochasticity, the conditioning of the problem, and its axis-alignment.
Sign-based methods also seem to have an adverse effect on the generalization performance of the obtained solution; a possible starting point for further research into the generalization effects of optimization algorithms.

The second aspect, variance adaptation, is not restricted to \textsc{adam} but can be applied to any update direction.
We have provided a general motivation for variance adaptation factors that is independent of the update direction.
In particular, we introduced \textsc{m-svag}, a variance-adapted variant of momentum \textsc{sgd}, which is a useful addition to the practitioner's toolbox for problems where sign-based methods like \textsc{adam} fail.
A TensorFlow \citep{Tensorflow2015} implementation can be found at \url{https://github.com/lballes/msvag}.

\section*{Acknowledgements}
The authors thank Maren Mahsereci for helpful discussions.
The authors acknowledge financial support by the European Research Council through ERC StG Action 757275 / PANAMA during a part of the project.
Lukas Balles kindly acknowledges the support of the International Max Planck Research School for Intelligent Systems (IMPRS-IS).

%\textbf{Do not} include acknowledgements in the initial version of
%the paper submitted for blind review.
%
%If a paper is accepted, the final camera-ready version can (and
%probably should) include acknowledgements. In this case, please
%place such acknowledgements in an unnumbered section at the
%end of the paper. Typically, this will include thanks to reviewers
%who gave useful comments, to colleagues who contributed to the ideas,
%and to funding agencies and corporate sponsors that provided financial
%support.

% In the unusual situation where you want a paper to appear in the
% references without citing it in the main text, use \nocite
% \nocite{langley00}

\bibliography{references}
\bibliographystyle{icml2018}

% Uncomment to include supplements
\clearpage
\appendix
\section*{\centering ---Supplementary Material---}
\section{Experiments}
\label{appendix_experiments}

\subsection{Network Architectures}

\paragraph{Fashion-MNIST}
We trained a simple convolutional neural network with two convolutional layers (size 5$\times$5, 32 and 64 filters, respectively), each followed by max-pooling over 3$\times$3 areas with stride 2, and a fully-connected layer with 1024 units.
ReLU activation was used for all layers.
The output layer has 10 units with softmax activation.
We used cross-entropy loss, without any additional regularization, and a mini-batch size of 64.
We trained for a total of 6000 steps with a constant global step size $\alpha$.

\paragraph{CIFAR-10}
We trained a CNN with three convolutional layers (64 filters of size 5$\times$5, 96 filters of size 3$\times$3, and 128 filters of size 3$\times$3) interspersed with max-pooling over 3$\times$3 areas with stride 2 and followed by two fully-connected layers with 512 and 256 units.
ReLU activation was used for all layers.
The output layer has 10 units with softmax activation.
We used cross-entropy loss function and applied $L_2$-regularization on all weights, but not the biases.
During training we performed some standard data augmentation operations (random cropping of sub-images, left-right mirroring, color distortion) on the input images.
We used a batch size of 128 and trained for a total of 40k steps with a constant global step size $\alpha$.

\paragraph{CIFAR-100}
We use the WRN-40-4 architecture of \citet{Zagoruyko2016}; details can be found in the original paper.
We used cross-entropy loss and applied $L_2$-regularization on all weights, but not the biases.
We used the same data augmentation operations as for \textsc{cifar-10}, a batch size of 128, and trained for 80k steps.
For the global step size $\alpha$, we used the decrease schedule suggested by \citet{Zagoruyko2016}, which amounts to multiplying with a factor of 0.2  after 24k, 48k, and 64k steps.
TensorFlow code was adapted from \url{https://github.com/dalgu90/wrn-tensorflow}.

\paragraph{War and Peace}
We preprocessed \emph{War and Peace}, extracting a vocabulary of 83 characters.
The language model is a two-layer LSTM with 128 hidden units each.
We used a sequence length of 50 characters and a batch size of 50.
Drop-out regularization was applied during training.
We trained for 200k steps; the global step size $\alpha$ was multiplied with a factor of 0.1 after 125k steps.
TensorFlow code was adapted from \url{https://github.com/sherjilozair/char-rnn-tensorflow}.

\subsection{Step Size Tuning}

Step sizes $\alpha$ (initial step sizes for the experiments with a step size decrease schedule) for each optimizer have been tuned by first finding the maximal stable step size by trial and error and then searching downwards over multiple orders of magnitude, testing $6\cdot 10^m$, $3\cdot 10^m$, and $1\cdot 10^m$ for order of magnitude $m$.
We evaluated loss and accuracy on the full test set (as well as on an equally-sized portion of the training set) at a constant interval and selected the best-performing step size for each method in terms of maximally reached test accuracy.
Using the best choice, we replicated the experiment ten times with different random seeds, randomizing the parameter initialization, data set shuffling, drop-out, et cetera.
In some rare cases where the accuracies for two different step sizes were very close, we replicated both and then chose the one with the higher maximum mean accuracy.

The following list shows all explored step sizes, with the ``winner'' in bold face.

\textbf{Problem 1: Fashion-\textsc{mnist}}\\
\textsc{m-sgd}:\\
 $3, 1, 6\cdot 10^{-1}, 3\cdot 10^{-1}, \mathbf{1\cdot 10^{-1}}, 6\cdot 10^{-2}, 3\cdot 10^{-2}, 1\cdot 10^{-2}, 6\cdot 10^{-3}, 3\cdot 10^{-3}$\\
\textsc{adam}:\\
$3\cdot 10^{-2}, 10^{-2}, 6\cdot 10^{-3}, 3\cdot 10^{-3}, \mathbf{1\cdot 10^{-3}}, 6\cdot 10^{-4}, 3\cdot 10^{-4}, 1\cdot 10^{-4}$\\
\textsc{m-ssd}:\\
$10^{-2}, 6\cdot 10^{-3}, 3\cdot 10^{-3}, 1\cdot 10^{-3}, 6\cdot 10^{-4}, \mathbf{3\cdot 10^{-4}}, 1\cdot 10^{-4}$\\
\textsc{m-svag}:\\
$3, 1, 6\cdot 10^{-1}, \mathbf{3\cdot 10^{-1}}, 1\cdot 10^{-1}, 6\cdot 10^{-2}, 3\cdot 10^{-2}, 1\cdot 10^{-2}, 6\cdot 10^{-3}, 3\cdot 10^{-3}$

\textbf{Problem 2: \textsc{cifar-10}}\\
\textsc{m-sgd}:\\ $6\cdot 10^{-1}, 3\cdot 10^{-1}, 1\cdot 10^{-1}, 6\cdot 10^{-2}, \mathbf{3\cdot 10^{-2}}, 1\cdot 10^{-2}, 6\cdot 10^{-3}, 3\cdot 10^{-3}$\\
\textsc{adam}:\\
$6\cdot 10^{-3}, 3\cdot 10^{-3}, 1\cdot 10^{-3}, \mathbf{6\cdot 10^{-4}}, 3\cdot 10^{-4}, 1\cdot 10^{-4}, 6\cdot 10^{-5}$\\
\textsc{m-ssd}:\\
$6\cdot 10^{-3}, 3\cdot 10^{-3}, 1\cdot 10^{-3}, 6\cdot 10^{-4}, 3\cdot 10^{-4}, \mathbf{1\cdot 10^{-4}}, 6\cdot 10^{-5}, 3\cdot 10^{-5}$\\
\textsc{m-svag}:\\
$1, 6\cdot 10^{-1}, 3\cdot 10^{-1}, 1\cdot 10^{-1}, \mathbf{6\cdot 10^{-2}}, 3\cdot 10^{-2}, 1\cdot 10^{-2}, 6\cdot 10^{-3}$

\textbf{Problem 3: \textsc{cifar-100}}\\
\textsc{m-sgd}:\\
 $6, \mathbf{3}, 1, 6\cdot 10^{-1}, 3\cdot 10^{-1}, 1\cdot 10^{-1}, 6\cdot 10^{-2}, \mathbf{3\cdot 10^{-2}}, 1\cdot 10^{-2}$\\
\textsc{adam}:\\
$1\cdot 10^{-2}, 6\cdot 10^{-3}, 3\cdot 10^{-3}, 1\cdot 10^{-3}, 6\cdot 10^{-4}, \mathbf{3\cdot 10^{-4}}, 1\cdot 10^{-4}, 6\cdot 10^{-5}, 3\cdot10^{-5}$\\
\textsc{m-ssd}:\\
$1\cdot 10^{-2}, 6\cdot 10^{-3}, 3\cdot 10^{-3}, 1\cdot 10^{-3}, 6\cdot 10^{-4}, 3\cdot 10^{-4}, \mathbf{1\cdot 10^{-4}}, 6\cdot 10^{-5}, 3\cdot10^{-5}$\\
\textsc{m-svag}:\\
$6, \mathbf{3}, 1, 6\cdot 10^{-1}, 3\cdot 10^{-1}, 1\cdot 10^{-1}, 6\cdot 10^{-2}, \mathbf{3\cdot 10^{-2}}, 1\cdot 10^{-2}$

\textbf{Problem 4: War and Peace}\\
\textsc{m-sgd}:\\
$10, 6, \mathbf{3}, 1, 6\cdot 10^{-1}, 3\cdot 10^{-1}, 1\cdot 10^{-1}, 6\cdot 10^{-2}$\\
\textsc{adam}:\\
$1\cdot 10^{-2}, 6\cdot 10^{-3}, \mathbf{3\cdot 10^{-3}}, 1\cdot 10^{-3}, 6\cdot 10^{-4}, 3\cdot 10^{-4}, 1\cdot 10^{-4}, 6\cdot 10^{-5}$\\
\textsc{m-ssd}:\\
$1\cdot 10^{-2}, 6\cdot 10^{-3}, 3\cdot 10^{-3}, \mathbf{1\cdot 10^{-3}}, 6\cdot 10^{-4}, 3\cdot 10^{-4}, 1\cdot 10^{-4}, 6\cdot 10^{-5}$\\
\textsc{m-svag}:\\
$30,\mathbf{10}, 6, 3, 1, 6\cdot 10^{-1}, 3\cdot 10^{-1}, 1\cdot 10^{-1}$

\section{Mathematical Details}

\subsection{The Sign of a Stochastic Gradient}
\label{sign_of_g}

We have stated in the main text that the sign of a stochastic gradient, $s(\theta)=\sign(g(\theta))$, has success probabilities
\begin{equation}
\begin{split}
\rho_i & :=\mathbf{P}[s(\theta)_i = \sign(\nabla\L(\theta)_i)] \\
& = \frac{1}{2} + \frac{1}{2} \erf\left( \frac{\vert \nabla\L(\theta)_i \vert}{\sqrt{2} \sigma(\theta)_i} \right)
\end{split}
\end{equation}
under the assumption that $g\sim\mathcal{N}(\nabla\L, \Sigma)$.
The following Lemma formally proves this statement and Figure \ref{fig:gaussian_overlap} provides a pictorial illustration.
\begin{lemma}
\label{lemma_rho_gaussian}
If $X\sim\mathcal{N}(\mu,\sigma^2)$ then
\begin{equation}
\mathbf{P}[\sign(X) = \sign(\mu)] = \frac{1}{2} \left( 1+ \erf\left( \frac{\vert \mu \vert}{\sqrt{2}\sigma} \right) \right).
\end{equation}
\end{lemma}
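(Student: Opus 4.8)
The plan is to reduce the statement to the standard normal distribution and then convert the resulting cumulative distribution function into an error function. First I would standardize: writing $Z := (X-\mu)/\sigma$, we have $Z\sim\mathcal{N}(0,1)$, and I would denote by $\Phi$ its cumulative distribution function. Since $\{X=0\}$ has probability zero, the event $\{\sign(X)=\sign(\mu)\}$ coincides (up to a null set) with $\{X>0\}$ when $\mu>0$ and with $\{X<0\}$ when $\mu<0$, so the natural first move is to split on the sign of $\mu$.

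In the case $\mu>0$, I would compute
\begin{equation}
\mathbf{P}[\sign(X)=\sign(\mu)] = \mathbf{P}[X>0] = \mathbf{P}\!\left[Z > -\frac{\mu}{\sigma}\right] = \Phi\!\left(\frac{\mu}{\sigma}\right),
\end{equation}
using symmetry of the standard normal in the last step. In the case $\mu<0$, the symmetric computation gives $\mathbf{P}[X<0] = \Phi(-\mu/\sigma)$. Both expressions can be written uniformly as $\Phi(\lvert\mu\rvert/\sigma)$, and the degenerate case $\mu=0$ is handled directly by the convention $\sign(0)=1$, for which $\mathbf{P}[X>0]=1/2$ matches the claimed formula since $\erf(0)=0$.

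The final step is to invoke the elementary identity relating the Gaussian CDF to the error function,
\begin{equation}
\Phi(z) = \frac{1}{2}\left(1 + \erf\!\left(\frac{z}{\sqrt{2}}\right)\right),
\end{equation}
which follows from the definition $\erf(w)=\tfrac{2}{\sqrt{\pi}}\int_0^w e^{-u^2}\,du$ by a change of variables. Applying it with $z=\lvert\mu\rvert/\sigma$ yields exactly the stated expression. I do not anticipate a genuine obstacle here: the argument is essentially bookkeeping. The only point requiring a little care is consolidating the two sign cases into a single formula in terms of $\lvert\mu\rvert$ and verifying that the boundary case $\mu=0$ is consistent with the chosen sign convention; everything else is a standard reduction to $\Phi$ and the $\erf$ identity.
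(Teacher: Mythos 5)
Your proposal is correct and follows essentially the same route as the paper's proof: split on the sign of $\mu$, reduce to the standard normal CDF, and apply the identity $\Phi(z)=\tfrac{1}{2}(1+\erf(z/\sqrt{2}))$ together with the (anti)symmetry of $\erf$ to consolidate the cases into a single formula in $\vert\mu\vert$. Your explicit treatment of the boundary case $\mu=0$ is a small addition the paper handles only implicitly via its $\sign(0)=1$ convention, but otherwise the arguments coincide.
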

\begin{proof}
Define $\rho:=\mathbf{P}[\sign(X) = \sign(\mu)]$. The cumulative density function (cdf) of $X\sim\mathcal{N}(\mu, \sigma^2)$ is $\mathbf{P}[X\leq x] = \Phi((x-\mu)/\sigma)$, where $\Phi(z) = 0.5  (1+\erf(z/\sqrt{2}))$ is the cdf of the standard normal distribution. If $\mu <0$, then
\begin{equation}
\begin{split}
\rho & = \mathbf{P}[X<0] = \Phi\left(\frac{0- \mu}{\sigma} \right) \\
& = \frac{1}{2} \left( 1+\erf\left(\frac{-\mu}{\sqrt{2} \sigma} \right) \right).
\end{split}
\end{equation}
If $\mu > 0$, then 
\begin{equation}
\begin{split}
\rho & = \mathbf{P}[X>0] = 1 - \mathbf{P}[X \leq 0] = 1 - \Phi\left(\frac{0- \mu}{\sigma} \right) \\
& = 1 - \frac{1}{2} \left( 1+\erf\left(\frac{-\mu}{\sqrt{2} \sigma} \right) \right) \\
& = \frac{1}{2} \left( 1+\erf\left(\frac{\mu}{\sqrt{2} \sigma} \right) \right),
\end{split}
\end{equation}
where the last step used the anti-symmetry of the error function.
\end{proof}
\begin{figure*}
\centering
\includegraphics{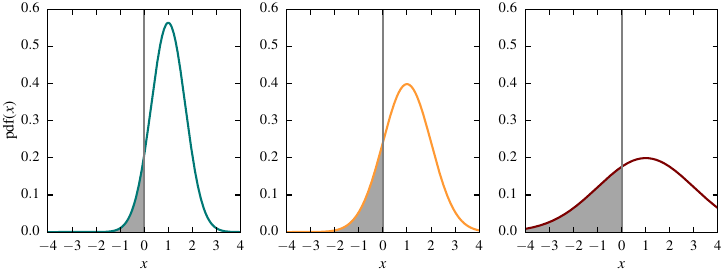}
\caption{Probability density functions (pdf) of three Gaussian distributions, all with $\mu=1$, but different variances $\sigma^2=0.5$ (left), $\sigma^2=1.0$ (middle), $\sigma^2=4.0$ (right). The shaded area under the curve corresponds to the probability that a sample from the distribution has the opposite sign than its mean. For the Gaussian distribution, this probability is uniquely determined by the fraction $\sigma / \vert\mu\vert$, as shown in Lemma \ref{lemma_rho_gaussian}.}
\label{fig:gaussian_overlap}
\end{figure*}

\subsection{Analysis on Stochastic QPs}
\label{why_sign_supplements}

\subsubsection{Derivation of $\mathcal{I}_\textsc{sgd}$ and $\mathcal{I}_\textsc{ssd}$}
We derive the expressions in Eq.~\eqref{eq:improvement_sgd_ssd}, dropping the fixed $\theta$ from the notation for readability.

For \textsc{sgd}, we have $\mathbf{E}[g]=\nabla \L$ and $\mathbf{E}[g^TQg]=\nabla \L^TQ\nabla \L + \tr(Q\mathbf{cov}[g])$, which is a general fact for quadratic forms of random variables.
For the stochastic QP the gradient covariance is $\mathbf{cov}[g] = \nu^2 QQ$, thus $\tr(Q\mathbf{cov}[g])=\nu^2 \tr(QQQ)=\nu^2 \sum_i \lambda_i^3$. Plugging everything into Eq.~\eqref{eq:improvement_def} yields
\begin{equation}
\mathcal{I}_\textsc{sgd} = \frac{(\nabla \L^T\nabla\L)^2}{\nabla\L^T Q \nabla\L + \nu^2 \sum_{i=1}^d \lambda_i^3}.
\end{equation}

For stochastic sign descent, $s=\sign(g)$, we have $\mathbf{E}[s_i] = (2\rho_i - 1) \sign(\nabla \L_i )$ and thus $\nabla \L^T \mathbf{E}[s] = \sum_{i=1}^d \nabla\L_i \mathbf{E}[s_i] = \sum_i (2\rho_i-1) \vert\nabla \L_i \vert$.
Regarding the denominator, it is
\begin{equation}
\begin{split}
s^T Q s & \leq \left| \sum_{i=1}^d q_{ij} s_i s_j \right| \leq \sum_{i=1}^d \vert q_{ij} \vert \vert s_i\vert \vert s_j\vert \\
&= \sum_{i=1}^d \vert q_{ij} \vert,
\end{split}
\end{equation}
since $\vert s_i\vert = 1$.
Further, by definition of $p_\text{diag}(Q)$, we have $\sum_{i=1}^d \vert q_{ij} \vert = p_\text{diag}(Q)^{-1} \sum_{i=1}^d \vert q_{ii}\vert $.
Since $Q$ is positive definite, its diagonal elements are positive, such that $\sum_{i=1}^d \vert q_{ii}\vert = \sum_{i=1}^d q_{ii} =\sum_{i=1}^d \lambda_i$.
Plugging everything into Eq.~\eqref{eq:improvement_def} yields
\begin{equation}
\mathcal{I}_\textsc{ssd}  \geq \frac{1}{2} \frac{\left( \sum_{i=1}^d (2\rho_i - 1) \vert \nabla\L(\theta)_i\vert \right)^2}{\sum_{i=1}^d \lambda_i} p_\text{diag}(Q).
\end{equation}

\subsubsection{Properties of $p_\text{diag}(Q)$}
By writing $Q=\sum_k \lambda_k v_k v_k^T$ in its eigendecomposition with orthonormal eigenvectors $v_k\in\mathbb{R}^d$, we find
\begin{equation}
\begin{split}
\sum_{i,j}\vert q_{ij}\vert & = \sum_{i,j} \left| \sum_k \lambda_k v_{k,i} v_{k,j} \right| \leq \sum_{i,j} \sum_k \lambda_k \vert v_{k,i} v_{k,j} \vert \\
& = \sum_k \lambda_k \left( \sum_i \vert v_{k,i}\vert \right) \left( \sum_j \vert v_{k,j}\vert \right)\\
& \leq \sum_k \lambda_k \Vert v_k\Vert_1^2.
\end{split}
\end{equation}
As mentioned before, $\sum_i\vert q_{ii}\vert = \sum_i \lambda_i$.
Hence,
\begin{equation}
\label{eq:p_diag_with_eigenvectors}
p_\text{diag}(Q) = \frac{\sum_i\vert q_{ii}\vert}{\sum_{i,j}\vert q_{ij}\vert} = \frac{\sum_i \lambda_i}{\sum_i \lambda_i \Vert v_i\Vert_1^2}.
\end{equation}
As we have already seen, the best case arises if the eigenvectors are axis-aligned (diagonal $Q$), resulting in $\Vert v_i\Vert_1 = \Vert v_i\Vert_2 = 1$.

A worst case bound originates from the (tight) upper bound $\Vert w \Vert_1 \leq \sqrt{d} \Vert w\Vert_2$ for any $w\in\mathbb{R}^d$, which results in 
\begin{equation}
p_\text{diag}(Q) \geq \frac{1}{d}.
\end{equation}

We can get a rough intuition for the average case from the following consideration:
For a $d$-dimensional random vector $w\sim\mathcal{N}(0, I)$, which corresponds to a random orientation, we have
\begin{equation}
\mathbf{E}[\Vert w\Vert_2] \approx \sqrt{d},\quad \mathbf{E}[\Vert w\Vert_1] = d \sqrt{2/\pi}.
\end{equation}
As a rough approximation, we can thus assume that a randomly-oriented vector will satisfy $\Vert w\Vert_1 \approx \sqrt{2d/\pi} \Vert w\Vert_2$.
Plugging that in for the eigenvectors of $Q$ in Eq.~\eqref{eq:p_diag_with_eigenvectors} yields an approximate average case value of
\begin{equation}
p_\text{diag}(Q) \approx \frac{\pi}{2d} \approx \frac{1.57}{d}.
\end{equation}

\subsection{Variance Adaptation Factors}

\label{appendix_va_factors}

\begin{proof}[Proof of Lemma \ref{lemma:optimal_va_factors}]
Using $\mathbf{E}[\hat{p}_i]=p_i$ and $\mathbf{E}[\hat{p}_i^2]=p_i^2 + \sigma_i^2$, we get
\begin{equation}
\begin{split}
\mathbf{E} [\Vert \gamma \odot \hat{p} - p\Vert_2^2] & = \sum_{i=1}^d \mathbf{E}[(\gamma_i \hat{p}_i - p_i)^2] \\
& = \sum_{i=1}^d \left( \gamma_i^2\mathbf{E}[\hat{p}_i^2] -2 \gamma_i p_i \mathbf{E}[\hat{p}_i] + p_i^2 \right) \\
& = \sum_{i=1}^d \left( \gamma_i^2(p_i^2 + \sigma_i^2) - 2\gamma_i p_i^2 + p_i^2 \right).
\end{split}
\end{equation}
Setting the derivative w.r.t. $\gamma_i$ to zero, we find the optimal choice
\begin{equation}
\gamma_i = \frac{p_i^2}{p_i^2 + \sigma_i^2}.
\end{equation}
For the second part, using $\mathbf{E}[\sign(\hat{p}_i)] = (2\rho_i -1) \sign(p_i)$ and $\sign(\cdot)^2=1$, we get
\begin{equation}
\begin{split}
& \mathbf{E}[\Vert \gamma\odot\sign(\hat{p}) - \sign(p) \Vert_2^2] \\
& = \sum_{i=1}^d \mathbf{E} \left[ (\gamma_i \sign(\hat{p}_i) - \sign(p_i))^2 \right] \\
& = \sum_{i=1}^d \left(\gamma_i^2 - 2\gamma_i \sign(p_i)\mathbf{E}[\sign(\hat{p}_i)] + 1\right) \\
& = \sum_{i=1}^d \left( \gamma_i^2 - 2\gamma_i (2\rho_i-1)  + 1\right)
\end{split}
\end{equation}
and easily find the optimal choice
\begin{equation}
\gamma_i = 2\rho_i - 1.
\end{equation}
by setting the derivative to zero.
\end{proof}

\subsection{Convergence of Idealized SVAG}

\label{appendix_svag_convergence}

We prove the convergence results for idealized variance-adapted stochastic gradient descent (Theorem \ref{theorem:convergence_svag}).
The stochastic optimizer generates a discrete stochastic process $\{\theta_t\}_{t\in\mathbb{N}_0}$.
We denote as $\mathbf{E}_t[\cdot] = \mathbf{E}[\cdot \vert \theta_t]$ the conditional expectation given a realization of that process up to time step $t$.
Recall that $\mathbf{E}[\mathbf{E}_t[\cdot]]=\mathbf{E}[\cdot]$.

We first show the following Lemma.

\begin{lemma}
\label{lemma:bounds_gradient_norm_suboptimaliy}
Let $f\colon\mathbb{R}^d\to\mathbb{R}$ be $\mu$-strongly convex and $L$-smooth. Denote as $\theta_\ast := \argmin_{\theta\in\mathbb{R}^d} f(\theta)$ the unique minimizer and $f_\ast=f(\theta_\ast)$. Then, for any $\theta\in\mathbb{R}^d$,
\begin{equation}
\frac{2L^2}{\mu} (f(\theta) - f_\ast) \geq \Vert \nabla f(\theta) \Vert^2 \geq 2\mu (f(\theta)-f_\ast).
\end{equation}
\end{lemma}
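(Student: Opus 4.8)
The plan is to prove the two inequalities separately, each as a standard consequence of the two regularity assumptions, routing through the quantity $\Vert\theta-\theta_\ast\Vert$ where convenient. I would invoke two first-order characterizations: $\mu$-strong convexity, giving $f(y) \geq f(x) + \nabla f(x)^T(y-x) + \tfrac{\mu}{2}\Vert y-x\Vert^2$ for all $x,y$; and $L$-smoothness, which yields the Lipschitz-gradient estimate $\Vert\nabla f(x) - \nabla f(y)\Vert \leq L\Vert x-y\Vert$. I would also use $\nabla f(\theta_\ast)=0$.

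For the lower bound $\Vert\nabla f(\theta)\Vert^2 \geq 2\mu(f(\theta)-f_\ast)$, I would fix $x=\theta$ in the strong-convexity inequality and minimize its right-hand side over $y\in\mathbb{R}^d$. The unconstrained minimizer is $y=\theta-\mu^{-1}\nabla f(\theta)$, and since the bound holds for every $y$, plugging in this minimizer gives
\[
f_\ast = \min_y f(y) \geq f(\theta) - \tfrac{1}{2\mu}\Vert\nabla f(\theta)\Vert^2,
\]
which rearranges to the claim. This is the Polyak--Lojasiewicz inequality.

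For the upper bound $\tfrac{2L^2}{\mu}(f(\theta)-f_\ast) \geq \Vert\nabla f(\theta)\Vert^2$, I would chain two estimates through $\Vert\theta-\theta_\ast\Vert$. First, Lipschitz continuity of the gradient together with $\nabla f(\theta_\ast)=0$ gives $\Vert\nabla f(\theta)\Vert = \Vert\nabla f(\theta)-\nabla f(\theta_\ast)\Vert \leq L\Vert\theta-\theta_\ast\Vert$. Second, applying the strong-convexity inequality with $x=\theta_\ast$ and $y=\theta$ (again using $\nabla f(\theta_\ast)=0$) yields $f(\theta)-f_\ast \geq \tfrac{\mu}{2}\Vert\theta-\theta_\ast\Vert^2$, i.e.\ $\Vert\theta-\theta_\ast\Vert^2 \leq \tfrac{2}{\mu}(f(\theta)-f_\ast)$. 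Substituting gives $\Vert\nabla f(\theta)\Vert^2 \leq L^2\Vert\theta-\theta_\ast\Vert^2 \leq \tfrac{2L^2}{\mu}(f(\theta)-f_\ast)$.

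Neither direction presents a genuine obstacle; these are textbook manipulations. The one subtlety worth flagging lies in the lower bound: one must minimize the quadratic surrogate over all of $\mathbb{R}^d$ rather than merely evaluating it at $y=\theta_\ast$, since the latter leaves a dangling inner-product term and fails to produce the clean constant $2\mu$. Everything else is direct substitution.
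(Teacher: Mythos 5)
Your proof is correct and follows essentially the same route as the paper's: the upper bound via $\Vert\nabla f(\theta)\Vert\leq L\Vert\theta-\theta_\ast\Vert$ chained with $f(\theta)-f_\ast\geq\tfrac{\mu}{2}\Vert\theta-\theta_\ast\Vert^2$, and the lower bound by minimizing the strong-convexity quadratic over all of $\mathbb{R}^d$ to obtain the Polyak--Lojasiewicz inequality. No differences worth noting.
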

\begin{proof}
Regarding the first inequality, we use $\nabla f(\theta_\ast)=0$ and the Lipschitz continuity of $\nabla f(\cdot)$ to get $\Vert \nabla f(\theta)\Vert^2 = \Vert \nabla f(\theta) - \nabla f(\theta_\ast)\Vert^2 \leq L^2 \Vert \theta - \theta_\ast\Vert^2$.
Using strong convexity, we have $f(\theta) \geq f_\ast + \nabla f(\theta_\ast)^T(\theta -\theta_\ast) + (\mu/2) \Vert \theta-\theta_\ast\Vert^2 = f_\ast + (\mu/2) \Vert \theta - \theta_\ast\Vert^2$.
Plugging the two inequalities together yields the desired inequality.

The second inequality arises from strong convexity, by minimizing both sides of
\begin{equation}
f(\theta^\prime) \geq f(\theta) + \nabla f(\theta)^T(\theta^\prime - \theta) + \frac{\mu}{2} \Vert \theta^\prime - \theta\Vert^2
\end{equation}
w.r.t. $\theta^\prime$.
The left-hand side obviously has minimal value $f_\ast$.
For the right-hand side, we set its derivative, $\nabla f(\theta) + \mu (\theta^\prime - \theta)$, to zero to find the minimizer $\theta^\prime=\theta-\nabla f(\theta)/\mu$.
Plugging that back in yields the minimal value $f(\theta) - \Vert \nabla f(\theta) \Vert/(2\mu)$.
\end{proof}

\begin{proof}[Proof of Theorem \ref{theorem:convergence_svag}]
Using the Lipschitz continuity of $\nabla f$, we can bound $f(\theta + \Delta\theta) \leq f(\theta) + \nabla f(\theta)^T \Delta\theta + \frac{L}{2} \Vert\Delta\theta\Vert^2$. Hence,
\begin{equation}
\begin{aligned}
& \mathbf{E}_t[f_{t+1}]\\
\leq &\, f_t - \alpha \mathbf{E}_t[ \nabla f_t^T(\gamma_t \odot g_t)] + \frac{L\alpha^2}{2} \mathbf{E}_t[\Vert \gamma_t \odot g_t\Vert^2 ]\\
= & \,f_t - \frac{1}{L} \sum_{i=1}^d \gamma_{t,i} \nabla f_{t,i} \mathbf{E}_t[g_{t,i}] + \frac{1}{2L} \sum_{i=1}^d \gamma_{t,i}^2 \mathbf{E}_t[g_{t,i}^2] \\
=  &\,f_t - \frac{1}{L} \sum_{i=1}^d \gamma_{t,i} \nabla f_{t,i}^2 + \frac{1}{2L} \sum_{i=1}^d \gamma_{t,i}^2 (\nabla f_{t,i}^2 + \sigma_{t,i}^2).
\end{aligned}
\end{equation}
Plugging in the definition $\gamma_{t,i} = \nabla f_{t,i}^2 / (\nabla f_{t,i}^2 + \sigma_{t,i}^2)$ and simplifying, we get
\begin{equation}
\label{eq:expected_decrease_isvag}
\mathbf{E}_t[f_{t+1}] \leq f_t - \frac{1}{2L} \sum_{i=1}^d \frac{\nabla f_{t,i}^4}{\nabla f_{t,i}^2 + \sigma_{t,i}^2}.
\end{equation}
This shows that $\mathbf{E}_t[f_{t+1}] \leq f_t$.
Defining $e_t:= f_t - f_\ast$, this implies
\begin{equation}
\mathbf{E}[e_{t+1}] = \mathbf{E}[\mathbf{E}_t[e_{t+1}]] \leq \mathbf{E}[e_t]
\end{equation}
and consequently, by iterating backwards, $\mathbf{E}[e_t] \leq \mathbf{E}[e_0] = e_0$ for all $t$.
Next, using the discrete version of Jensen's inequality\footnote{
Jensen's inequality states that, for a real convex function $\phi$, numbers $x_i\in\mathbb{R}$, and positive weights $a_i \in\mathbb{R}_+$ with $\sum_i a_i=1$, we have $\sum_i a_i \phi(x_i) \geq \phi\left(\sum_i a_i x_i\right)$.
We apply it here to the convex function $\phi(x)=1/x, x>0,$ with $x_i:=\frac{\nabla f_{t,i}^2 + \sigma_{t,i}^2}{\nabla f_{t,i}^2}$ and $a_i:=\frac{\nabla f_{t,i}^2}{\Vert \nabla f_t\Vert^2}$.} we find
\begin{equation}
\label{eq:expected_decrease_after_jensen}
\sum_{i=1}^d \frac{\nabla f_{t,i}^4}{\nabla f_{t,i}^2 + \sigma_{t,i}^2} \geq \frac{\Vert \nabla f_t\Vert^4}{\Vert \nabla f_t\Vert^2 + \sum_{i=1}^d \sigma_{t,i}^2}.
\end{equation}
Using the assumption $\sum_{i=1}^d \sigma_{t,i}^2 \leq c_v \vert \nabla f_t\Vert^2 + M_v$ in the denominator, we obtain
\begin{equation}
\label{eq:expected_decrease_after_variance_assumption}
\frac{\Vert \nabla f_t\Vert^4}{\Vert \nabla f_t\Vert^2 + \sum_{i=1}^d \sigma_{t,i}^2} \geq 
\frac{\Vert \nabla f_t\Vert^4}{(1+c_v)\Vert \nabla f_t\Vert^2 + M_v}.
\end{equation}
Using Lemma \ref{lemma:bounds_gradient_norm_suboptimaliy}, we have
\begin{equation}
\frac{2L^2}{\mu} e_t \geq \Vert \nabla f_t\Vert^2 \geq 2\mu e_t
\end{equation}
and can further bound
\begin{equation}
\label{eq:expected_decrease_in_terms_of_suboptimality}
\begin{split}
\frac{\Vert \nabla f_t\Vert^4}{(1+c_v)\Vert \nabla f_t\Vert^2 + M_v} & \geq \frac{4\mu^2 e_t^2}{\frac{2(1+c_v)L^2}{\mu} e_t + M_v} \\
& =: \frac{c_1 e_t^2}{c_2 e_t + c_3},
\end{split}
\end{equation}
where the last equality defines the (positive) constants $c_1, c_2$ and $c_3$.
Combining Eqs.~\eqref{eq:expected_decrease_after_jensen}, \eqref{eq:expected_decrease_after_variance_assumption} and \eqref{eq:expected_decrease_in_terms_of_suboptimality}, inserting in \eqref{eq:expected_decrease_isvag}, and subtracting $f_\ast$ from both sides, we obtain
\begin{equation}
\mathbf{E}_t[e_{t+1}] \leq e_t - \frac{1}{2L} \frac{c_1 e_t^2}{c_2 e_t + c_3},
\end{equation}
and, consequently, by taking expectations on both sides,
\begin{equation}
\begin{split}
\mathbf{E}[e_{t+1}]  & \leq \mathbf{E}[e_t] - \frac{1}{2L} \mathbf{E}\left[ \frac{c_1 e_t^2}{c_2 e_t + c_3} \right] \\
& \leq \mathbf{E}[e_t] - \frac{1}{2L} \frac{c_1 \mathbf{E}[e_t]^2}{c_2 \mathbf{E}[e_t] + c_3}
\end{split}
\end{equation}
where the last step is due to Jensen's inequality applied to the convex function $\phi(x) = \frac{c_1 x^2}{c_2 x + c_3}$.
Using $\mathbf{E}[e_t]\leq e_0$ in the denominator and introducing the shorthand $\bar{e}_t := \mathbf{E}[e_t]$, we get
\begin{equation}
\bar{e}_{t+1} \leq  \bar{e}_t - c \bar{e}_t^2 = \bar{e}_t(1-c\bar{e}_t),
\end{equation}
with $c := c_1/(2L(c_2e_0 + c_3))>0$.
To conclude the proof, we will show that this implies $\bar{e}_t\in\mathcal{O}(\frac{1}{t})$.
Without loss of generality, we assume $\bar{e}_{t+1}>0$ and obtain
\begin{equation}
\begin{split}
\bar{e}_{t+1}^{-1} & \geq \bar{e}_t^{-1}\left( 1- c \bar{e}_t \right)^{-1} \geq \bar{e}_t^{-1}\left( 1 + c\bar{e}_t \right) \\
& = \bar{e}_t^{-1} + c,
\end{split}
\end{equation}
where the second step is due to the simple fact that $(1-x)^{-1}\geq (1+x)$ for any $x\in [0, 1)$.
Summing this inequality over $t=0,\dotsc,T-1$ yields $\bar{e}_T^{-1} \geq e_0^{-1} + Tc$ and, thus,
\begin{equation}
T \bar{e}_T \leq \left( \frac{1}{Te_0} + c \right)^{-1} \overset{T\rightarrow\infty}{\longrightarrow} \frac{1}{c} < \infty,
\end{equation}
which shows that $\bar{e}_t\in\mathcal{O}(\frac{1}{t})$.
\end{proof}

\subsection{Gradient Variance Estimates via Moving Averages}
\label{details_moving_average_estimates}

We proof Eq.~\eqref{eq:expectation_of_m_squared}.
Iterating the recursive formula for $\tilde{m}_t$ backwards, we get
\begin{equation}
\label{eq:iterating_mt_backwards}
\begin{split}
m_t = \sum_{s=0}^t \underbrace{\frac{1-\beta_1}{1-\beta_1^{t+1}} \beta_1^{t-s}}_{=:c(\beta_1, t, s)} g_s,
\end{split}
\end{equation}
with coefficients $c(\beta_1, t, s)$ summing to one by the geometric sum formula, making $m_t$ a convex combination of stochastic gradients.
Likewise, $v_t = \sum_{s=0}^t c(\beta_2, t, s) g_s^2$ is a convex combination of squared stochastic gradients.
Hence,
\begin{equation}
\begin{split}
\mathbf{E}[m_{t,i}] & = \sum c(\beta, t, s) \mathbf{E}[g_{s,i}],\\
\mathbf{E}[v_{t,i}] & = \sum c(\beta, t, s) \mathbf{E}[g_{s,i}^2].
\end{split}
\end{equation}
Assumption \ref{assumption:iid_grads} thus necessarily implies $\mathbf{E}[g_{s,i}]\approx \nabla \L_{t,i}$ and $\mathbf{E}[g_{s,i}^2] \approx \nabla \L_{t,i}^2 + \sigma_{t,i}^2$.
(This will of course be utterly wrong for gradient observations that are far in the past, but these won't contribute significantly to the moving average.)
It follows that
\begin{equation}
\begin{split}
\mathbf{E}[m_{t,i}^2] & = \mathbf{E}[m_{t,i}]^2 + \mathbf{var}[m_{t,i}]\\
& = \nabla \L_{t, i}^2 + \sum_{s=0}^{t} c(\beta, t, s)^2 \, \mathbf{var}[g_{s,i}] \\
& = \nabla \L_{t,i}^2 + \sigma_{t,i}^2 \sum_{s=0}^t c(\beta, t, s)^2,
\end{split}
\end{equation}
where the second step is due to the fact that $g_s$ and $g_{s^\prime}$ are stochastically independent for $s\neq s^\prime$.
The last term evaluates to
\begin{equation}
\begin{split}
\rho(\beta, t) & := \sum_{s=0}^t c(\beta, t, s)^2  = \sum_{s=0}^t \left( \frac{1-\beta}{1-\beta^{t+1}} \beta^{t-s} \right)^2\\
& = \frac{(1-\beta)^2}{(1-\beta^{t+1})^2} \sum_{k=0}^t (\beta^2)^k \\
& = \frac{(1-\beta)^2}{(1-\beta^{t+1})^2} \, \frac{1-(\beta^2)^{t+1}}{1-\beta^2} \\
& = \frac{(1-\beta)(1-\beta)}{(1-\beta^{t+1})(1-\beta^{t+1})} \, \frac{(1-\beta^{t+1})(1+\beta^{t+1})}{(1-\beta)(1+\beta)}\\
& = \frac{(1-\beta)(1+\beta^{t+1})}{(1+\beta)(1-\beta^{t+1})},
\end{split}
\end{equation}
where the fourth step is another application of the geometric sum formula, and the fifth step uses $1-x^2 = (1-x)(1+x)$.
Note that
\begin{equation}
\rho(\beta, t) \rightarrow \frac{1-\beta}{1+\beta} \quad (t\rightarrow\infty),
\end{equation}
such that $\rho(\beta, t)$ is uniquely defined by $\beta$ in the long term.

As an interesting side note, the division by $1-\rho(\beta, t)$ in Eq.~\eqref{eq:bias_corrected_ema_estimate} is the  analogon to Bessel's correction (the use of $n-1$ instead of $n$ in the classical sample variance) for the case where we use moving averages instead of arithmetic means.

\subsection{Connection to Generalization}

\label{details_generalization}

\begin{proof}[Proof of Lemma \ref{lemma:wilson_lemma_extended_to_sd}]
Like in the proof of Lemma 3.1 in \citet{Wilson2017}, we inductively show that $\theta_t = \lambda_t \sign(X^T y)$ with a scalar $\lambda_t$.
This trivially holds for $\theta_0=0$.
Assume that the assertion holds for all $s\leq t$.
Then
\begin{equation}
\label{eq:least_squares_classification_gradient}
\begin{split}
\nabla R(\theta_t) & = \frac{1}{n} X^T(X\theta_t - y) \\
& = \frac{1}{n} X^T(\lambda_t X \sign(X^T y) - y)\\
& = \frac{1}{n} X^T (\lambda_t cy -y) = \frac{1}{n} (\lambda_t c -1) X^T y,
\end{split}
\end{equation}
where the first step is the gradient of the objective (Eq.~\ref{eq:least_squares_classification}), the second step uses the inductive assumption, and the third step uses the assumption $X\sign(X^Ty)=cy$.
Now, plugging Eq.~\eqref{eq:least_squares_classification_gradient} into the update rule, we find
\begin{equation}
\begin{split}
\theta_{t+1} & = \theta_t - \alpha \sign(\nabla R(\theta_t)) \\
& = \lambda_t \sign(X^Ty) - \alpha \sign((\lambda_t c -1) X^T y) \\
& = (\lambda_t - \alpha \sign(\lambda_tc-1)) \sign(X^Ty).
\end{split}
\end{equation}
Hence, the assertion holds for $t+1$.
\end{proof}

\section{Alternative Methods}

\label{appendix_alternative_methods}

\subsection{SVAG}
\label{appendix_svag}

\textsc{m-svag} applies variance adaptation to the update direction $m_t$, resulting in the variance adaptation factors Eq.~\ref{eq:estimated_va_factors_msvag}.
We can also update in direction $g_t$ and choose the appropriate estimated variance adaptation factors, resulting in an implementation of \textsc{svag} without momentum.
We have already derived the necessary variance adaptation factors en route to those for the momentum variant, see Eq.~\eqref{eq:estimated_va_factors_grad} in \textsection\ref{estimating_svag_factors}.
Pseudo-code is provided in Alg.~\ref{alg:svag_ema}.
It differs from \textsc{m-svag} only in the last two lines.

\begin{algorithm}
\footnotesize
\caption{\textsc{svag}}
\label{alg:svag_ema}
\begin{algorithmic}
\STATE {\bfseries Input:} $\theta_0\in\mathbb{R}^d$, $\alpha>0$, $\beta \in [0,1]$, $T\in\mathbb{N}$
\STATE Initialize $\theta\gets \theta_0$, $\tilde{m}\gets 0$, $\tilde{v}\gets 0$
\FOR{$t=0,\dotsc, T-1$}
  \STATE $\tilde{m}\gets \beta \tilde{m} + (1-\beta) g(\theta)$, \quad $\tilde{v}\gets \beta \tilde{v} + (1-\beta)g(\theta)^2$
  \vspace{1pt}
  \STATE $m \gets (1-\beta^{t+1})^{-1} \tilde{m}$, \quad $v \gets (1-\beta^{t+1})^{-1}\tilde{v}$
  \vspace{1pt}
  \STATE $s\gets (1-\rho(\beta, t))^{-1} (v-m^2)$ %\hfill (Eq.~\ref{eq:definition_rho})
  \vspace{1pt}
  \STATE $\gamma \gets m^2/(m^2 + s)$
  \vspace{1pt}
  \STATE $\theta \gets \theta - \alpha (\gamma\odot g)$
\ENDFOR
\end{algorithmic}
\end{algorithm}

\subsection{Variants of ADAM}
\label{appendix_myadam}

This paper interpreted \textsc{adam} as variance-adapted \textsc{m-ssd}.
The experiments in the main paper used a standard implementation of \textsc{adam} as described by \citet{Kingma2014}.
However, in the derivation of our implementation of \textsc{m-svag}, we have made multiple adjustments regarding the estimation of variance adaptation factors which correspondingly apply to the sign case.
Specifically, this concerns:
\begin{itemize}
\item The use of the same moving average constant for the first and second moment ($\beta_1=\beta_2=\beta$).
\item The bias correction in the gradient variance estimate, see Eq.~\eqref{eq:bias_corrected_ema_estimate}.
\item The adjustment of the variance adaptation factors for the momentum case, see \textsection\ref{incorporating_momentum}.
\item The omission of a constant offset $\varepsilon$ in the denominator.
\end{itemize}
Applying these adjustment to the sign case gives rise to a variant of the original \textsc{adam} algorithm, which we will refer to as \textsc{adam*}.
Pseudo-code is provided in Alg.~\ref{alg:myadam}. 
Note that we use the variance adaptation factors $(1+\eta)^{-1/2}$ and \emph{not} the optimal ones derived in \textsection\ref{va_for_sign}, which would under the Gaussian assumption be $\erf[(\sqrt{2} \eta)^{-1}]$.
We initially experimented with both variants and found them to perform almost identically, which is not surprising given how similar the two are (see Fig.~\ref{fig:va_factors}).
We thus stuck with the first option for direct correspondence with the original \textsc{adam} and to avoid the cumbersome error function.

In analogy to \textsc{svag} versus \textsc{m-svag}, we could also define a variance-adapted version stochastic sign descent \emph{without} momentum, i.e., using the base update direction $\sign(g_t)$.
We did not explore this further in this work.

\begin{algorithm}
\footnotesize
\caption{\textsc{adam*}}
\label{alg:myadam}
\begin{algorithmic}
\STATE {\bfseries Input:} $\theta_0\in\mathbb{R}^d$, $\alpha>0$, $\beta \in [0,1]$, $T\in\mathbb{N}$
\STATE Initialize $\theta\gets \theta_0$, $\tilde{m}\gets 0$, $\tilde{v}\gets 0$
\FOR{$t=0,\dotsc, T-1$}
  \STATE $\tilde{m}\gets \beta \tilde{m} + (1-\beta) g(\theta)$, \quad $\tilde{v}\gets \beta \tilde{v} + (1-\beta)g(\theta)^2$
  \vspace{1pt}
  \STATE $m \gets (1-\beta^{t+1})^{-1} \tilde{m}$, \quad $v \gets (1-\beta^{t+1})^{-1}\tilde{v}$
  \vspace{1pt}
  \STATE $s\gets (1-\rho(\beta, t))^{-1} (v-m^2)$ %\hfill (Eq.~\ref{eq:definition_rho})
  \vspace{1pt}
  \STATE $\gamma \gets \sqrt{ m^2/(m^2 + \rho(\beta, t)s)}$
  \vspace{1pt}
  \STATE $\theta \gets \theta - \alpha (\gamma\odot \sign(m))$
\ENDFOR
\end{algorithmic}
\end{algorithm}

%\begin{algorithm}
%\footnotesize
%\caption{\textsc{adam*} without momentum}
%\label{alg:myadam_no_momentum}
%\begin{algorithmic}
%\STATE {\bfseries Input:} $\theta_0\in\mathbb{R}^d$, $\alpha>0$, $\beta \in [0,1]$, $T\in\mathbb{N}$
%\STATE Initialize $\theta\gets \theta_0$, $\tilde{m}\gets 0$, $\tilde{v}\gets 0$
%\FOR{$t=0,\dotsc, T-1$}
%  \STATE $\tilde{m}\gets \beta \tilde{m} + (1-\beta) g(\theta)$, \quad $\tilde{v}\gets \beta \tilde{v} + (1-\beta)g(\theta)^2$
%  \vspace{1pt}
%  \STATE $m \gets (1-\beta^{t+1})^{-1} \tilde{m}$, \quad $v \gets (1-\beta^{t+1})^{-1}\tilde{v}$
%  \vspace{1pt}
%  \STATE $s\gets (1-\rho(\beta, t))^{-1} (v-m^2)$ 
%  \vspace{1pt}
%  \STATE $\gamma \gets \sqrt{ m^2/(m^2 + s)}$
%  \vspace{1pt}
%  \STATE $\theta \gets \theta - \alpha (\gamma\odot \sign(g))$
%\ENDFOR
%\end{algorithmic}
%\end{algorithm}

\subsection{Experiments}

We tested \textsc{svag} as well as \textsc{adam*} with and without momentum on the problems (P2) and (P3) from the main paper.
Results are shown in Figure \ref{fig:results_alternatives}.

We observe that \textsc{svag} performs better than \textsc{m-svag} on (P2).
On (P3), it makes faster initial progress but later plateaus, leading to slightly worse outcomes in both training loss and test accuracy.
\textsc{svag} is a viable alternative.
In future work, it will be interesting to apply \textsc{svag} to problems where \textsc{sgd} outperforms \textsc{m-sgd}.

Next, we compare \textsc{adam*} to the original \textsc{adam} algorithm.
In the \textsc{cifar-100} example (P3) the two methods are on par.
On (P2), \textsc{adam} is marginally faster in the early stages of the the optimization process.
\textsc{adam*} quickly catches up and reaches lower minimal training loss values.
We conclude that the adjustments to the variance adaptation factors derived in \textsection\ref{practical_implementation} do have a positive effect.

\begin{figure}[t]
\centering
\includegraphics[scale=0.98]{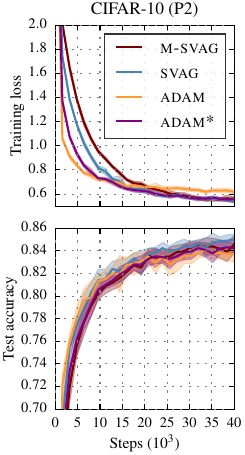}
\includegraphics[scale=0.98]{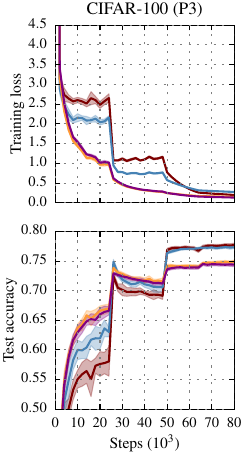}
\caption{Experimental results for \textsc{svag} and \textsc{adam*}. The plot is set-up like Fig.~\ref{fig:results}.}
\label{fig:results_alternatives}
\end{figure}

\section{Mini-Batch Gradient Variance Estimates}

\label{appendix_mini_batch_variance_estimate}

In the main text, we have discussed estimation of gradient variances via moving averages of the past gradient observations.
An alternative gradient variance estimate can be obtained locally, within a single mini-batch.
The individual gradients $\nabla \ell(\theta; x_k)$ in a mini-batch are iid random variables and $\mathbf{var}[g(\theta)] = \vert \B\vert^{-1} \mathbf{var}_{k\sim\mathcal{U}([M])}[\nabla \ell(\theta; x_k)]$.
We can thus estimate $g(\theta)$'s variances by computing the sample variance of the $\{\nabla \ell(\theta; x_k)\}_{k\in\mathcal{B}}$, then scaling by $\vert\B\vert^{-1}$,
\begin{equation}
\label{eq:sample_variance}
\hat{s}^\text{mb}(\theta) = \frac{1}{\vert\B\vert} \left( \frac{1}{\vert\B\vert-1} \sum_{k\in\B} \nabla \ell(\theta; x_k)^2 - g(\theta)^2 \right).
\end{equation}
Several recent papers~\citep{Mahsereci2015, Balles2017} have used this variance estimate for other aspects of stochastic optimization.
In contrast to the moving average-based estimators, this is an unbiased estimate of the \emph{local} gradient variance.
The (non-trivial) implementation of this estimator for neural networks is described in \citet{Balles2017workshop}.

\subsection{M-SVAG with Mini-Batch Estimates}

We explored a variant of \textsc{m-svag} which use mini-batch gradient variance estimates.
The local variance estimation allows for a theoretically more pleasing treatment of the variance of the update direction $m_t$.
Starting from the formulation of $m_t$ in Eq.~\eqref{eq:iterating_mt_backwards} and considering that $g_s$ and $g_{s^\prime}$ are stochastically independent for $s\neq s^\prime$, we have
\begin{equation}
\mathbf{var}[m_t] = \sum_{s=0}^t \left( \frac{1-\beta}{1-\beta^{t+1}} \beta^{t-s} \right)^2 \mathbf{var}[g_s].
\end{equation}
Given that we now have access to a true, local, unbiased estimate of $\mathbf{var}[g_s]$, we can estimate $\mathbf{var}[m_t]$ by
\begin{equation}
\label{eq:minibatch_variance_estimate_of_mt}
\bar{s}_t := \sum_{s=0}^t \left( \frac{1-\beta}{1-\beta^{t+1}} \beta^{t-s} \right)^2 \hat{s}^\text{mb}(\theta_s).
\end{equation}
It turns out that we can track this quantity with another exponential moving average: It is $\bar{s}_t = \rho(\beta, t) r_t$ with
\begin{gather}
\label{eq:minibatch_variance_estimate_of_mt_as_moving_average}
\tilde{r}_t = \beta^2 \tilde{r}_{t-1} + (1-\beta^2) \hat{s}^\text{mb}_t, \quad r_t = \frac{\tilde{r}_t}{1-(\beta^2)^{t+1}}.
\end{gather}
This can be shown by iterating Eq.~\eqref{eq:minibatch_variance_estimate_of_mt_as_moving_average} backwards and comparing coefficients with Eq.~\eqref{eq:minibatch_variance_estimate_of_mt}.
The resulting mini-batch variant of \textsc{m-svag} is presented in Algorithm \ref{alg:msvag_mb}.
\begin{algorithm}[b]
\footnotesize
\caption{\textsc{m-svag} with mini-batch variance estimate}
\label{alg:msvag_mb}
\begin{algorithmic}
\STATE {\bfseries Input:} $\theta_0\in\mathbb{R}^d$, $\alpha>0$, $\beta \in [0,1]$, $T\in\mathbb{N}$
\STATE Initialize $\theta\gets \theta_0$, $\tilde{m}\gets 0$, $\tilde{r}\gets 0$
\FOR{$t=0,\dotsc, T-1$}
  \STATE Compute mini-batch gradient $g(\theta)$ and variance $\hat{s}^\text{mb}(\theta)$
  \STATE $\tilde{m}\gets \beta \tilde{m} + (1-\beta) g(\theta)$, \quad $\tilde{r}\gets \beta^2 \tilde{r} + (1-\beta^2) \hat{s}^\text{mb}(\theta)$
  \STATE $m \gets (1-\beta^{t+1})^{-1} \tilde{m}$, \quad $r \gets (1-\beta^{2(t+1)})^{-1}\tilde{r}$
  \STATE $\gamma \gets m^2/(m^2 + \rho(\beta, t) r)$
  \STATE $\theta \gets \theta - \alpha (\gamma\odot m)$
\ENDFOR
\end{algorithmic}
%{\footnotesize We do not use an $\varepsilon$-parameter as in \textsc{adam}. In the (rare) case that $m_i=0$ for coordinate $i$, the division by zero is caught and the update magnitude is set to zero.}
\end{algorithm}

Note that mini-batch gradient variance estimates could likewise be used for the alternative methods discussed in \textsection\ref{appendix_alternative_methods}.
We do not explore this further in this paper.

\subsection{Experiments}

We tested the mini-batch variant of \textsc{m-svag} on the problems (P1) and (P2) from the main text and compared it to the moving average version.
Results are shown in Figure \ref{fig:results_msvagmb}.
The two algorithms have almost identical performance.

\begin{figure}[t]
\centering
\includegraphics[scale=0.98]{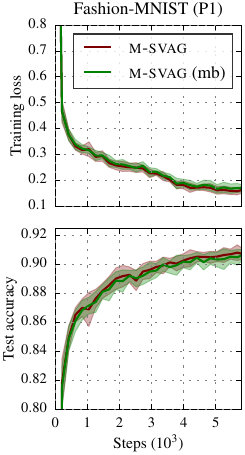}
\includegraphics[scale=0.98]{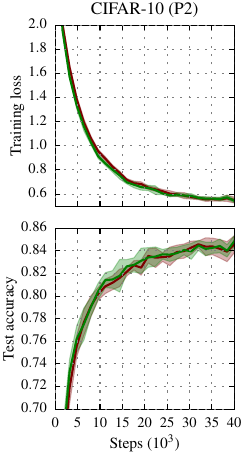}
\caption{Experimental results for the mini-batch variant of \textsc{m-svag} (marked ``mb'' in the legend).
The plot is set-up like Fig.~\ref{fig:results}.}
\label{fig:results_msvagmb}
\end{figure}

\end{document}